\theoremstyle{plain}
\newtheorem{theorem}{Theorem}[section]
\newtheorem{proposition}[theorem]{Proposition}
\theoremstyle{definition}
\newtheorem{definition}[theorem]{Definition}
\theoremstyle{remark}
\Crefname{equation}{}{}
\icmltitlerunning{Efficient Pareto Manifold Learning with Low-Rank Structure}
\begin{document}

\twocolumn[
\icmltitle{Efficient Pareto Manifold Learning with Low-Rank Structure}


\icmlsetsymbol{equal}{*}

\begin{icmlauthorlist}
\icmlauthor{Weiyu Chen}{hkust}
\icmlauthor{James T. Kwok}{hkust}
\end{icmlauthorlist}

\icmlaffiliation{hkust}{Department of Computer Science and Engineering, The Hong Kong University of Science and Technology}

\icmlcorrespondingauthor{Weiyu Chen}{wchenbx@cse.ust.hk}

\icmlkeywords{multi-objective optimization, multi-task learning}

\vskip 0.3in
]



\printAffiliationsAndNotice{}  

\newcommand{\E}{\mathbb{E}}
\newcommand{\N}{\mathbb{N}}
\newcommand{\R}{\mathbb{R}}
\newcommand{\V}{\mathbb{V}}

\newcommand{\X}{\mathcal{X}}
\newcommand{\B}{\mathcal{B}}

\newcommand{\bone}{\boldsymbol{1}}

\newcommand{\ba}{\boldsymbol{a}}
\newcommand{\bb}{\boldsymbol{b}}
\newcommand{\bd}{\boldsymbol{d}}
\newcommand{\bg}{\boldsymbol{g}}
\newcommand{\bh}{\boldsymbol{h}}
\newcommand{\bp}{\boldsymbol{p}}
\newcommand{\bq}{\boldsymbol{q}}
\newcommand{\br}{\boldsymbol{r}}
\newcommand{\bt}{\boldsymbol{t}}
\newcommand{\bx}{\boldsymbol{x}}
\newcommand{\by}{\boldsymbol{y}}
\newcommand{\bu}{\boldsymbol{u}}
\newcommand{\bv}{\boldsymbol{v}}
\newcommand{\bw}{\boldsymbol{w}}
\newcommand{\bz}{\boldsymbol{z}}

\newcommand{\bZ}{\boldsymbol{Z}}
\newcommand{\bD}{\boldsymbol{D}}
\newcommand{\bF}{\boldsymbol{F}}
\newcommand{\bG}{\boldsymbol{G}}
\newcommand{\bA}{\boldsymbol{A}}
\newcommand{\bB}{\boldsymbol{B}}
\newcommand{\bC}{\boldsymbol{C}}
\newcommand{\bM}{\boldsymbol{M}}
\newcommand{\bN}{\boldsymbol{N}}
\newcommand{\bR}{\boldsymbol{R}}
\newcommand{\bS}{\boldsymbol{S}}
\newcommand{\bI}{\boldsymbol{I}}
\newcommand{\bW}{\boldsymbol{W}}
\newcommand{\bU}{\boldsymbol{U}}

\newcommand{\balp}{\boldsymbol{\alpha}}
\newcommand{\bxi}{\boldsymbol{\xi}}
\newcommand{\bnu}{\boldsymbol{\nu}}
\newcommand{\blam}{\boldsymbol{\lambda}}
\newcommand{\bpi}{\boldsymbol{\pi}}
\newcommand{\bthe}{\boldsymbol{\theta}}
\newcommand{\bepsilon}{\boldsymbol{\epsilon}}

\newcommand{\tcb}{\textcolor{blue}}
\newcommand{\norm}[1]{\left\|#1\right\|}
\newcommand{\inner}[2]{\left\langle#1, #2\right\rangle}

\begin{abstract}
Multi-task learning, which optimizes performance across multiple tasks, is inherently a multi-objective optimization problem. Various algorithms are developed to provide discrete trade-off solutions on the Pareto front. Recently, continuous Pareto front approximations using a linear combination of base networks have emerged as a compelling strategy. However, it suffers from scalability issues when the number of tasks is large. To address this issue, we propose a novel approach that integrates a main network with several low-rank matrices to efficiently learn the Pareto manifold. It significantly reduces the number of parameters and facilitates the extraction of shared features. We also introduce orthogonal regularization to further bolster performance. Extensive experimental results demonstrate that the proposed approach outperforms state-of-the-art baselines, especially on datasets with a large number of tasks.
\end{abstract}

\section{Introduction}
Multi-task learning (MTL) \cite{mtl, mtl2} endeavors to efficiently and effectively learn multiple tasks, leveraging shared information to enhance overall performance. 
As tasks may have disparate scales and potentially conflicting objectives,
the challenge of balancing the tasks is a critical aspect of MTL.

Building on the seminal work that conceptualizes MTL as a multi-objective optimization (MOO) problem \cite{mtlmo}, a suite of algorithms has been developed to obtain trade-off solutions along the Pareto front (PF). Examples include EPO \cite{EPO}, PMTL \cite{PMTL}, CAGrad \cite{CAGrad}, and Nash-MTL \cite{NashMTL}. 
However, 
these methods
are limited to producing a finite set of discrete trade-off solutions, lacking the flexibility to adapt to varying user preferences.

Continuous approximation of the PF aims to provide an arbitrary number of solutions on the PF based on the user-provided preference vector. Pareto hypernetwork-based methods \cite{PHN, PHNHVI} utilize a hypernetwork \cite{hypernetworks} to learn the parameter space of a base network. This hypernetwork accepts preference vector as input and outputs the corresponding parameter of base network. Despite its efficacy, the size of the hypernetwork significantly exceeds that of the base network, thus restricting its use to small base networks. COSMOS \cite{COSMOS} attempts to circumvent this limitation by incorporating preference vectors directly into the base network as an extra input. However, it falls short of achieving satisfactory results due to the limited parameter space.

Pareto Manifold Learning (PaMaL) \cite{PAMAL}
proposes to find a collection of base networks whose linear combination in the parameter space based on the preference vectors yields the corresponding Pareto-optimal solution. This approach exhibits promising performance. However, 
as we need to maintain a separate base network for each task,
it incurs a corresponding surge in the number of parameters
when we have a large number of tasks.
Moreover, base networks cannot learn from each other
during training. This lack of sharing poses challenges when the ensemble comprises a large number of base networks.

To surmount these challenges, we introduce a novel approach that employs a main network and multiple low-rank matrices, which can significantly reduce the number of parameters when the number of tasks is large. Moreover, this design facilitates the extraction of shared features through the main network while simultaneously capturing task-specific differences via the low-rank matrices. We further integrate orthogonal regularization to promote disentanglement of these matrices. Empirical investigations demonstrate that the proposed algorithm outperforms PaMaL and other baselines, particularly when
the number of tasks
is large.

\textbf{Notations.} We denote the set $\{1,\ldots, m\}$ as $[m]$, and the $m$-dimensional simplex $ \{\balp \;|\; \sum_{i=1}^{m}\alpha_i = 1, \alpha_i \geq 0\}$ as $\Delta^m$. The norm $\|\cdot\|$ refers to the Euclidean norm when applied to vectors, and to the Frobenius norm when applied to matrices.

\section{Background}
\subsection{Multi-Objective Optimization}
A multi-objective optimization (MOO) \cite{moo} problem can be formulated as
        \begin{align} \label{eq:moo}
        \min_{\bthe \in \R^{d \times k}} ~~ \bF(\bthe) & = [f_1(\bthe), \ldots, f_m(\bthe)]^\top,
        \end{align}
where $m \geq 2$ is the number of objectives, and $d \times k$ is the shape of the parameter matrix. 
In this paper,
we represent the parameters in matrix form for convenience when discussing low-rank approximations.
\begin{definition}[Pareto Dominance and Pareto-Optimal \cite{moo}] 
A solution $\bthe_1$ is {\em dominated} by another solution $\bthe_2$ if and only if  $f_i(\bthe_1) \geq f_i(\bthe_2)$ for $i \in [m]$, and $\exists i \in [m], f_i(\bthe_1) > f_i(\bthe_2)$. A solution $\bthe^*$ is {\em Pareto-optimal} if and only if it is not dominated by any other $\bthe'$.
\end{definition}
A {\em Pareto set} (PS) is the set of all Pareto-optimal solutions. The {\em Pareto front} (PF) refers to the functional values associated with the solutions in the PS.

\subsection{Multi-Task Learning}
Multi-task learning (MTL) \cite{mtl, mtl2} refers to learning multiple tasks simultaneously.
In this paper, we focus on the optimization part, adopting the widely utilized shared-bottom architecture \cite{mtl} as the base network. More specifically, for a $m$-task network, we have a shared-bottom $\bthe^{sh}$ and $m$ task-specific heads $\bthe^{t_1}, \ldots, \bthe^{t_m}$. Throughout this paper, our primary focus is on the shared-bottom $\bthe^{sh}$. Hence, any mention of $\bthe$ hereafter refers specifically to $\bthe^{sh}$.

MTL can be viewed as a MOO problem \cite{mtlmo}. To deal with the possible conflicts of different tasks on the shared bottom, various algorithms have been proposed, such as MGDA \cite{mtlmo, MGDA}, PCGrad \cite{pcgrad}, IMTL \cite{imtl}, Graddrop \cite{graddrop}, RLW \cite{rlw}, CAGrad \cite{CAGrad}, Nash-MTL \cite{NashMTL}, and Auto-$\lambda$ \cite{autol}. These approaches, however, predominantly focus on obtaining a single solution rather than capturing the whole PF. 

\textbf{Discrete Approximation of Pareto Front.} \citet{PMTL} propose to generate a set of solutions to approximate the Pareto front. Subsequent research have introduced innovative strategies to enhance the discrete approximation. Notable among these are EPO \cite{EPO}, MOO-SVGD \cite{MOOSVGD}, GMOOAR \cite{gmooar} and PNG \cite{PNG}. These methods are inherently limited to producing a fixed set of discrete points on the PF. \citet{paretoexploration}
propose an algorithm to expand these discrete solutions in their vicinity. However, it can only produce PF segments around each discrete solution instead of the whole continuous PF.

\textbf{Continuous Approximation of Pareto Front.}
Continuous approximation of PF enables the derivation of solutions tailored to user preferences, offering an arbitrary resolution of the trade-off solutions.
\citet{PHN} introduces the concept of a Pareto hypernetwork (PHN), which learns a hypernetwork \cite{hypernetworks} that takes a preference vector as input and generates the corresponding Pareto-optimal network parameters. PHN-HVI \cite{PHNHVI} further 
develops the method using hypervolume maximization \cite{HIGA}. However, scalability of the approach becomes a concern when applying the hypernetwork to large networks, as the size of the hypernetwork is usually much larger than that of the base network itself. 

Conditioned one-shot multi-objective search (COSMOS) \cite{COSMOS} incorporates the preference vector as an additional input, enabling the generation of outputs with different preferences while minimizing the introduction of extra parameters. \citet{yoto} propose the use of FiLM layers \cite{film} to incorporate preferences through channel-wise multiplication and addition operations on the feature map. However, these methods are sometimes constrained by the relatively limited parameter space, which can lead to suboptimal performance.

Pareto manifold learning (PaMaL) \cite{PAMAL} 
aims to jointly learn multiple base networks $\bthe_1, \ldots, \bthe_m$, such that for any given preference vector $\balp \in \Delta^m$, the corresponding weighted combination of base networks leads to the Pareto-optimal model:
\begin{align}
  \bthe(\balp) = \sum_{i=1}^m \alpha_i \bthe_i.
\label{eq:pamal}
\end{align}
PaMaL demonstrates commendable performance. However, it only focuses on two or three tasks. When dealing with a large number of tasks, the number of parameters increases significantly as each task $i$ requires its own network $\bthe_i$. Moreover, the base networks cannot benefit from each other during training, which can potentially impair performance, especially when scaling to a larger number of base networks.

Learning a continuous PF is also explored in Bayesian optimization \cite{psl} and evolutionary optimization \cite{evopsl} for engineering problems with number of parameters up to 7. They fail to converge in deep learning scenarios due to the significantly larger parameter spaces.

\subsection{Low-Rank Adaptation}
Low-rank adaptation (LoRA) has emerged as a popular approach in the field of finetuning large pre-trained models \cite{LoRA}. It only finetunes a low-rank matrix instead of the entire model, resulting in improved efficiency and the prevention of overfitting. Some subsequent studies \cite{TA, tiesmerge, Tangent} explore the merging of LoRA modules trained on different tasks. These works focus on direct addition or subtraction without incorporating any training. 
\citet{lora_acl} proposes fine-tuning a pretrained model for multiple tasks using a single LoRA module (with some task-specific parameters). \citet{lora_otrh} proposes fine-tuning a pretrained model by adding LoRA modules one by one, where the newly-added LoRA module is orthogonal to the previous ones. 

The main difference between these methods and the proposed method is that they learn a single LoRA each time and output a single fine-tuned network, while we train multiple low-rank matrices simultaneously and output a subspace of networks. To mitigate dominance of unitary transforms in weight update as observed in LoRA,
\citet{multilora} propose to split the LoRA module to multiple LoRA modules.
However, none of the existing research considers joint training of an arbitrary convex combination of low-rank matrices to learn a model subspace.

\section{Proposed Method} 
In this section, we introduce a novel approach that involves learning a main module and $m$ low-rank matrices for each layer (\cref{sec:lrs}).
Orthogonal regularization 
is further introduced
in \cref{sec:or}. Finally, the whole algorithm is presented in \cref{sec:alg}.

\subsection{Low-Rank Structure}
\label{sec:lrs}
We start by examining the similarities of the base networks
obtained by PaMaL \cite{PAMAL} on {\it MultiMNIST} \cite{multimnist}, using the same experimental setting as PaMaL. \cref{fig:similarities} shows the layer-wise cosine similarities 
between two base networks' parameters throughout training. As can be seen,
the similarities are almost zero
at the beginning of training.
As training progresses, a marked increase in similarity is observed. Furthermore, similarities are higher at the lower-level layers.
This motivates us to consider reducing the redundancy of base networks.

Consider the $m$ base networks in \cref{eq:pamal}. Denote the parameters in the $l$th layer of base network $\bthe_i$ by $\bthe_i^l$. In the $l$th layer, \cref{eq:pamal} can be written as:
\begin{align}
  \bthe(\balp)^l = \sum_{i=1}^m \alpha_i \bthe_i^l.
\label{eq:pamal_l}
\end{align}
Let $\bthe_0^l = \frac{1}{m}\sum_{i=1}^m \bthe_i^l$. We can rewrite \cref{eq:pamal_l} as:
\begin{align}
  \bthe(\balp)^l = \sum_{i=1}^{m} \alpha_i \left(\bthe_0^l + \frac{1}{m}\sum_{j=1}^{m}(\bthe_i^l - \bthe_j^l) \right).
  \label{eq:sum_minus}
\end{align}
Given the similarities observed from \cref{fig:similarities}, we approximate the difference between any two modules $\bthe_i^l - \bthe_j^l$ by a low-rank matrix.
Then, we use 
the low-rank matrix 
$\bB_i^l \bA_i^l$, to replace $\frac{1}{m}\sum_{j=1}^{m}(\bthe_i^l - \bthe_j^l)$ in \cref{eq:sum_minus},
leading to:
\begin{align}
  \bthe(\balp)^l = \sum_{i=1}^{m} \alpha_i (\bthe_0^l + \bB_i^l \bA_i^l).
  \label{eq:subs_lora}
\end{align}
We further add a scaling factor $s$ to regulate the significance of the low-rank component, as:
\begin{align}
  \bthe(\balp)^l = \bthe_0^l + s \sum_{i=1}^m \alpha_i \bB_i^l\bA_i^l.
  \label{eq:lora_pamal}
\end{align}

Through the above transformation, instead of learning $m$ base module in \cref{eq:pamal_l}, we learn a main module $\bthe_0^l$ and $m$ low-rank matrices $\bB_1^l\bA_1^l, \ldots, \bB_m^l\bA_m^l$. 
The main modules are expected to capture the common features across multiple tasks, thereby providing a shared foundation that each low-rank adaptation can leverage. 
An illustrative example of the proposed method is shown in Figure \ref{fig:illus}.  

\begin{figure*}
\begin{minipage}{.36\textwidth}
  \centering
    \includegraphics[width=\textwidth,trim={10, 8, 10, 20}, clip]{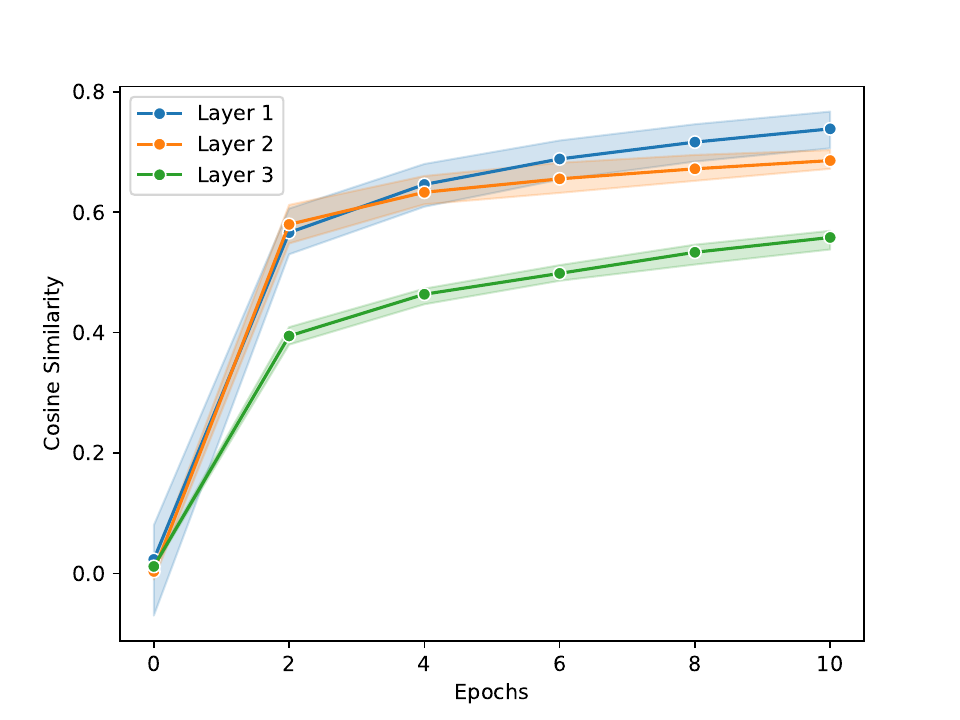}
      \vspace{-.1in}
      \caption{Layer-wise similarities between base networks obtained by PaMaL on {\it MultiMNIST} over three random seeds. Shaded areas represent the 95\% confidence interval.}
      \label{fig:similarities}
\end{minipage}   
\hfill
     \begin{minipage}{.6\textwidth}
     \centering
      \includegraphics[width=\textwidth]{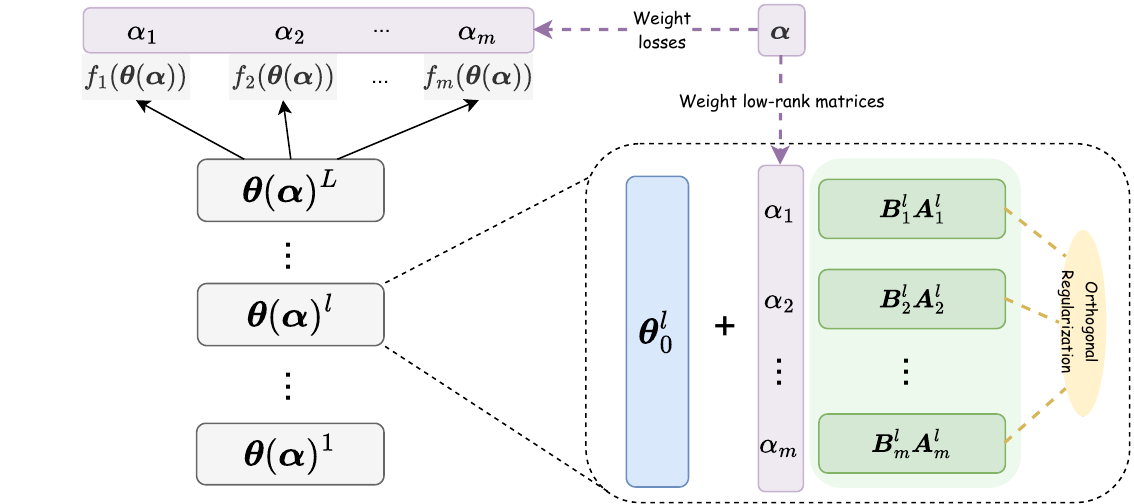}
      \caption{Illustration of the proposed LORPMAN on a $L$-layer base network with $m$ tasks. For each layer, we aim to learn $m$ low-rank matrices which are orthogonal to each other.}
      \label{fig:illus}
      \end{minipage}
\end{figure*}
\subsubsection{Computational Efficiency}
In the $l$th layer, 
let $\bB_i^l \in \R^{d^l \times r^l}$ and $\bA_i^l \in \R^{r^l \times k^l}$.
The proposed approach reduces the number of parameters to only $d^l k^l + m (d^l r^l + r^l k^l)$. Notably, the rank $r^l$ is usually set to be much smaller compared to $k^l$ and $d^l$. 
As a result, the proposed method achieves higher parameter efficiency compared to PaMaL, which has $m d^l k^l$ parameters. This difference becomes more significant
when dealing with larger networks and a larger number of tasks.

\subsubsection{Approximation Power}
Similar to PaMaL \cite{PAMAL},
we show 
in this section
the approximation power of the proposed structure.
While PaMaL  only considers two tasks,
we consider the more general case with $m$ tasks. Denote the optimal mapping from network input $\bx \in X \subset \R^u$ and preference vector $\balp \in \Delta^m$ to the corresponding point on the PF as $t(\bx, \balp): X \times \Delta^m \rightarrow \R^m$.
We have the following Theorem.
\begin{theorem}
\label{theo:approx}
Assume that $X \times \Delta^m$ is compact and $t(\bx, \balp)$ is continuous. For any $\epsilon > 0$, there exists a ReLU MLP $h$ with main network $\bthe_0$ and $m$ low-rank matrices $\bB_1\bA_1, \ldots, \bB_m\bA_m$, such that $\forall \bx \in X, \forall \balp \in \Delta^m$,
\begin{align*}
  \norm{t(\bx, \balp)-h\left(\bx; \bthe_0 + \sum_{i=1}^{m}\alpha_i \bB_i\bA_i\right)}\leq \epsilon.
\end{align*}
\end{theorem}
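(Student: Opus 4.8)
The plan is to reduce the statement to the classical universal approximation theorem for ReLU MLPs, and then to show that the particular parameterization $\bthe_0 + \sum_{i=1}^m \alpha_i \bB_i\bA_i$ (understood layer-wise as in \cref{eq:lora_pamal}) is rich enough to reproduce any such MLP once the preference vector is treated as an auxiliary input. Concretely, I would first regard $t$ as an ordinary continuous map on the compact domain $X \times \Delta^m \subset \R^{u+m}$. By universal approximation, for the given $\epsilon > 0$ there exists a ReLU MLP $g$ taking the concatenated vector $[\bx; \balp] \in \R^{u+m}$ as input with $\norm{t(\bx,\balp) - g([\bx;\balp])} \le \epsilon$ for all $(\bx,\balp) \in X \times \Delta^m$. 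The remaining work is to realize $(\bx,\balp) \mapsto g([\bx;\balp])$ by a network $h$ that receives only $\bx$ as an explicit input, letting the dependence on $\balp$ enter solely through the low-rank weight structure.

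Next I would construct $h$ so that it reproduces $g$ exactly. Augment the input with a constant feature, i.e.\ feed $\tilde\bx = [\bx; 1]$; this is harmless since $X \times \{1\}$ stays compact. Write the first layer of $g$ as $\sigma(W^{(x)}\bx + W^{(\alpha)}\balp + b)$, splitting the weight matrix column-wise into the block $W^{(x)}$ acting on $\bx$ and the block $W^{(\alpha)}$ acting on $\balp$, with columns $c_1,\dots,c_m$ of $W^{(\alpha)}$. For the main module of the first layer I would set $\bthe_0^1 = [\,W^{(x)} \mid b\,]$, so $\bthe_0^1 \tilde\bx = W^{(x)}\bx + b$. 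For each task $i$ I would choose a rank-one update whose $\bA_i^1$ extracts the constant coordinate of $\tilde\bx$ and whose $\bB_i^1$ maps it to $c_i$, namely $\bA_i^1 = \mathbf e_{u+1}^\top$ and $\bB_i^1 = c_i$. Then $\sum_{i=1}^m \alpha_i \bB_i^1\bA_i^1 \tilde\bx = \sum_{i=1}^m \alpha_i c_i = W^{(\alpha)}\balp$, so the first-layer pre-activation of $h$ equals $W^{(x)}\bx + W^{(\alpha)}\balp + b$, exactly matching $g$ on $[\bx;\balp]$.

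For every subsequent layer $l$ I would simply copy the corresponding weights of $g$ into $\bthe_0^l$ and set all low-rank terms $\bB_i^l\bA_i^l = \mathbf 0$, which is trivially low-rank. Since the first hidden activation of $h$ now coincides with that of $g$ and all later layers are identical, $h(\bx; \bthe_0 + \sum_{i=1}^m \alpha_i \bB_i\bA_i) = g([\bx;\balp])$ holds exactly for every $(\bx,\balp)$, and the desired bound follows immediately from the approximation step. I expect the crux to be precisely this injection step: verifying that the low-rank matrices can encode a term that is linear in $\balp$ yet independent of $\bx$. This is exactly why the constant-feature augmentation is essential, since applying $\sum_i \alpha_i \bB_i\bA_i$ to $\bx$ alone would yield a contribution bilinear in $(\balp,\bx)$ that cannot reproduce the pure $W^{(\alpha)}\balp$ term; it is also where the observation that rank one per task suffices originates, showing that the approximation power of the construction survives even for very small ranks.
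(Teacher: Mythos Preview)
Your argument is correct. Both your proof and the paper's begin identically, invoking universal approximation to obtain a ReLU MLP $g$ that approximates $t$ on the concatenated input $[\bx;\balp]$, and then construct an $h$ that sees only $\bx$ yet reproduces $g([\bx;\balp])$ exactly via rank-one task matrices. The routes diverge at the injection step. The paper \emph{prepends} an additional ReLU layer to $g$: using the identity $x=\sigma(x)-\sigma(-x)$ it passes $\bx$ through the nonlinearity untouched, while one-hot bias vectors $\bU_i$ encode $\balp$, so that the output of this extra layer is literally $[\bx,\balp]$ and is then fed to $g$ unchanged; the low-rank claim follows because each $\bU_i$ has a single nonzero entry. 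You instead \emph{modify} the first layer of $g$ directly, augmenting the input with a constant feature and letting $\bB_i^1\bA_i^1=c_i\,\mathbf e_{u+1}^\top$ pick out that constant to synthesize the $W^{(\alpha)}\balp$ term. Your construction is a bit leaner (no extra layer, no ReLU pass-through trick) and makes the layer-wise rank-one structure of \cref{eq:lora_pamal} fully explicit, whereas the paper's black-box prepending keeps $g$ intact and places the preference injection in a separate adapter. Both yield the same conclusion that rank one per task suffices.
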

The proof is in \cref{sec:proof}. This Theorem shows that given any preference vector $\balp$, the proposed main network with low-rank matrices can approximately output the corresponding point on the PF within any given error margin $\epsilon$.

\subsection{Orthogonal Regularization}
\label{sec:or}

Orthogonal regularization, which encourages rows or columns of weight matrices to be approximately orthogonal, has been employed in various scenarios to regulate the parameters in a neural network \cite{othreg1, orthogonality, othreg2}. 
Here, we propose to extend it for orthogonality among the low-rank matrices.

First, we flatten each low-rank matrix into a 1-dimensional vector and normalize:
$  \bw^l_i = \frac{\texttt{flatten}(\bB_i^l\bA_i^l)} { \norm{\texttt{flatten}(\bB_i^l\bA_i^l)}}$.
Next, we concatenate them to construct a $(d^lk^l) \times m$ matrix $\bW^l = \texttt{concatenate}(\bw^l_1, \ldots, \bw^l_m)$. Our objective is to encourage orthogonality among the matrix columns. This orthogonality loss can be computed as:
$  \mathcal{R}_o^l = \norm{(\bW^l)^\top\bW^l - \bI}_2^2$,
where $\bI$ is the $m \times m$ identity matrix.
Finally, we compute the loss for the entire network as:
\begin{align}
  \mathcal{R}_o = \frac{1}{L}\sum_{l=1}^L \mathcal{R}_o^l,
\label{eq:orth_loss}
\end{align}
where $L$ is the number of layers.

The objective of the orthogonal regularization is to reduce redundancy and conflicts between different low-rank matrices. By reducing redundancy, the model encourages learning common features in the main network and task-specific features in the low-rank matrices, which can lead to more efficient parameter usage and better performance. 

Note that the computational complexity of the loss calculation is $O(m^2)$, which becomes expensive for large $m$. To address this, we use stochastic approximation when $m$ exceeds 3. Specifically, in each iteration, a subset of tasks $\mathcal{T}$, with cardinality 3, is randomly selected from the set of all $m$ tasks. We then construct $\hat{\bW}^l$ by concatenating $\{\bw^l_i\}_{i \in \mathcal{T}}$. Then, $\mathcal{R}_o^l$ is estimated as:
\begin{align}
  \mathcal{R}_o^l = \norm{(\hat{\bW}^l)^\top\hat{\bW}^l - \bI}_2^2,
\label{eq:orth_loss_l_est}
\end{align}
where $\bI$ is the $3\times3$ identity matrix.

This stochastic approximation ensures the complexity is independent of $m$. Empirical evaluations in \cref{sec:large} 
demonstrate that this still maintains satisfactory performance.

\subsection{Optimization}
\label{sec:alg}
To learn the Pareto manifold, we minimize the expectation of loss given an $\balp$ over the Dirichlet distribution $Dir(\bp)$. The optimization objective (without regularization) can be written as:
\begin{equation}
\label{eq:obj}
        \min_{\boldsymbol{\theta}_0, \mathbf{B}_1\mathbf{A}_1, \ldots, \mathbf{B}_m\mathbf{A}_m} \mathbb{E}_{\boldsymbol{\alpha}}
        \mathbb{E}_{\boldsymbol{\xi}} \left[\sum_{i=1}^{m}\alpha_i f_i(\boldsymbol{\theta}(\boldsymbol{\alpha});\boldsymbol{\xi}) \right],
\end{equation}
where $\bxi$ is a mini-batch of multi-task training data $\{(x_i^1, x_i^2, \ldots, x_i^m, y_i^1, y_i^2, \ldots, y_i^m)\}_{i=1}^q$ and $q$ is the batch size.
For any given $\balp$, we aim to minimize $\mathbb{E}_{\boldsymbol{\xi}} [\sum_{i=1}^{m}\alpha_i f_i(\boldsymbol{\theta}(\boldsymbol{\alpha});\boldsymbol{\xi})]$, which is the linear scalarization of the $m$ objective functions. 
Note that the solution obtained by linear scalarization is Pareto-optimal, which can be formally stated as follows:
\begin{proposition}[\cite{convex}]
\label{prop:po}
Given any $\balp \in \{\balp ~|~ \alpha_i > 0\}$, the optimal solution of the scalarized problem $\min_{\bthe} \sum_{i=1}^m \alpha_i f_i(\bthe)$ is a Pareto-optimal solution of the original multi-objective optimization problem \cref{eq:moo}.   
\end{proposition}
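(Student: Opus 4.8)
The plan is to argue by contradiction, leveraging the strict positivity of the weights $\alpha_i$. Suppose $\bthe^*$ is an optimal solution of the scalarized problem $\min_{\bthe} \sum_{i=1}^m \alpha_i f_i(\bthe)$ but is \emph{not} Pareto-optimal. By the definition of Pareto dominance, there would then exist some $\bthe'$ that dominates $\bthe^*$; since the problem is a minimization, dominance means $f_i(\bthe^*) \geq f_i(\bthe')$ for every $i \in [m]$, with strict inequality $f_j(\bthe^*) > f_j(\bthe')$ for at least one index $j$.

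The key step is to carry these component-wise inequalities through to the scalarized objective. Because each $\alpha_i > 0$, multiplying the inequality $f_i(\bthe^*) \geq f_i(\bthe')$ by $\alpha_i$ preserves its direction, giving $\alpha_i f_i(\bthe^*) \geq \alpha_i f_i(\bthe')$ for all $i$. Crucially, since $\alpha_j > 0$ as well, the strict inequality at index $j$ survives as $\alpha_j f_j(\bthe^*) > \alpha_j f_j(\bthe')$. Summing over all $i$, the single strict term forces the total to be strict, so that $\sum_{i=1}^m \alpha_i f_i(\bthe^*) > \sum_{i=1}^m \alpha_i f_i(\bthe')$.

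This contradicts the assumed optimality of $\bthe^*$, since $\bthe'$ would attain a strictly smaller scalarized objective. Hence no dominating $\bthe'$ can exist, and $\bthe^*$ must be Pareto-optimal, completing the argument.

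I expect the only real obstacle to be bookkeeping: keeping the direction of the inequalities in the dominance definition aligned with the fact that, under minimization, a dominating point has \emph{smaller} objective values, and making explicit that strict positivity of \emph{every} weight --- not merely nonnegativity --- is what lets the lone strict inequality propagate into the sum. Were some $\alpha_i$ allowed to vanish, the argument would only deliver weak Pareto-optimality, which is precisely why the proposition restricts the preference vector to $\{\balp \mid \alpha_i > 0\}$.
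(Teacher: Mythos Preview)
Your argument is correct and matches the paper's proof essentially line for line: both proceed by contradiction, invoke the definition of Pareto dominance to obtain component-wise inequalities with at least one strict, and use $\alpha_i > 0$ to propagate strictness into the weighted sum, contradicting optimality of $\bthe^*$. Your closing remark on why strict positivity (rather than mere nonnegativity) is required is a nice addition not spelled out in the paper.
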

For the reader's convenience, the proof
is reproduced in \cref{sec:proof_po}. Thus, when \cref{eq:obj} is minimized, the obtained $\boldsymbol{\theta}(\boldsymbol{\alpha})$'s are Pareto-optimal solutions. Here, we only consider linear scalarization. More other scalarization methods \cite{tch, mtlmo, NashMTL}
can be considered.

The proposed algorithm,
which is called \underline{LO}w-\underline{R}ank \underline{P}areto \underline{MAN}ifold Learning (LORPMAN), is shown in  \cref{alg:pml}.
The training process is divided into two phases: First, we adapt both the main model $\boldsymbol{\theta}_0$ and low-rank matrices. After a certain number of epochs (which is referred to as \textit{freeze epoch}), we fix
the main model and only adapt the low-rank matrices. This encourages the low-rank matrices to learn task-specific representations instead of always relying on the main model to improve performance.

In each iteration, we sample $b$ preference vectors 
$\{\balp^1,\dots,
\balp^b\}
$
from the Dirichlet distribution. For each 
$\balp^i$, 
we compute the corresponding network parameters according to \cref{eq:lora_pamal}. Then, 
following \cite{PAMAL},
we calculate the multi-forward regularization loss $\mathcal{R}_p$ which penalizes incorrect solution ordering on the PF (details in \cref{sec:mf_reg}). Subsequently, we incorporate the orthogonal loss in \cref{eq:orth_loss} to encourage orthogonality among the different low-rank matrices, as discussed in the previous section.

\begin{algorithm}[tb]
  \caption{LORPMAN.}
  \label{alg:pml}
\begin{algorithmic}
  \STATE {\bfseries Input:} learnable main model parameters $\bthe_0$, learnable matrix $\{\bA_i, \bB_i\}_{i=1}^m$, distribution parameters $\bp$, window size $b$, batch size $q$, regularization coefficients $\lambda_p, \lambda_o$, scaling parameter $s$.
  \WHILE{not converged}
  \STATE sample a minibatch of multi-task training data $\boldsymbol{\xi}=\{(x_i^1, x_i^2, \ldots, x_i^m, y_i^1, y_i^2, \ldots, y_i^m)\}_{i=1}^q$;
  \STATE independently sample $\balp^1, \ldots, \balp^b$ from $\text{Dir}(\bp)$;
  \STATE compute corresponding model parameter for each $\balp^j$\\ 
  $\bthe(\balp^j)^l = \bthe_0^l + s \sum_{i=1}^m \alpha_i^j \bB_i^l\bA_i^l$;
  \STATE compute multi-forward regularization loss $\mathcal{R}_p$;
  \STATE compute orthogonal loss $\mathcal{R}_o$ in Eq. (\ref{eq:orth_loss});
  \STATE compute loss $\mathcal{L} = \sum_{j=1}^{b}\sum_{i=1}^{m}\alpha^j_i f_i(\bthe(\balp^j);\bxi) + \lambda_p \mathcal{R}_p + \lambda_o \mathcal{R}_o$;
  
  \IF {current epoch $<$ freeze epoch}
  \STATE take a gradient descent step on $\bthe_0$ and $\{\bA_i, \bB_i\}_{i=1}^m$;
  \ELSE
  \STATE take a gradient descent step on $\{\bA_i, \bB_i\}_{i=1}^m$;
  \ENDIF
  
  \ENDWHILE
\end{algorithmic}
\end{algorithm}

\section{Experiments}
In this section, we first demonstrate the training process of the proposed algorithm on a toy problem (\cref{sec:illex}). Then, we perform experiments on datasets with two or three tasks (\cref{sec:two_three}). Next, we scale up the number of tasks to up to 40 (\cref{sec:large}). We then compare the proposed method with algorithms that generate 
discrete PFs  (\cref{sec:discrete_baseline})
and algorithms that can generate a 
single solution 
(\cref{sec:single_baseline}).
Finally, ablation studies are presented in \cref{sec:ablation}. 

We adopt the Hypervolume (HV)
\cite{hv1, hv2} as performance indicator, which is widely used and can evaluate both the convergence and diversity of the obtained PF. Details of the HV indicator are in \cref{sec:hv}. To evaluate the performance of the obtained PFs, following \cite{PAMAL}, we sample 11 solutions on the obtained PF in the two-objective cases, 66 solutions in the three-objective cases, and 100 solutions when there are more than three objectives. We tune the hyperparameters according to the HV value on the validation datasets.

We compare the proposed method with state-of-the-art 
continuous PF approximation algorithms, namely PHN \cite{PHN}, PHN-HVI
\cite{PHNHVI}, COSMOS \cite{COSMOS}, and PaMaL \cite{PAMAL}. Since PHN and PHN-HVI require training a hypernetwork with significantly more parameters than the base network, they 
can only be run on \textit{MultiMNIST} and \textit{Census}.

\subsection{Illustrative Example}
\label{sec:illex}
To demonstrate the training process of the proposed algorithm, we employ a widely used two-objective toy problem \cite{pcgrad, CAGrad, NashMTL}, with parameter $\bthe = [\theta^1, \theta^2] \in \R^2$. The detailed definition of the toy problem is in \cref{sec:toy}.

We initialize $\bthe_0$ to $[4.5, 4.5]$. Since the parameters of the toy problem do not have a matrix structure, instead of using $\bB_i\bA_i$s for low-rank approximation, we use $\Delta\bthe_i \in \R^2$, where 
the second component is fixed to mimic the low-rank approximation constraint.  
We initialize $\Delta\bthe_1 = [-4.5, 0.0]$ and $\Delta\bthe_2=[4.5, 0.0]$. Given an $\balp$, the corresponding Pareto optimal parameter is 
$\bthe(\balp) = \bthe_0 + \alpha_1\Delta\bthe_1 + \alpha_2 \Delta\bthe_2$.

\cref{fig:toy} shows the trajectories of $\bthe$
obtained by LORPMAN
with $\balp = [0.5, 0.5]$, $[1, 0]$, and $[0,1]$. 
We can see that $\bthe$
reaches the middle of the PF
for $\balp = [0.5, 0.5]$, and 
to the two extreme points of the PF when
$\balp = [1, 0]$ and $[0,1]$. From \cref{fig:toy_ps}, we can see that these three solutions are on the same line in parameter space and we can obtain the whole Pareto set by varying $\balp$.

\begin{figure}[tbp]
  \centering
  \begin{subfigure}{0.45\columnwidth}
      \centering
      \includegraphics[width=\textwidth, trim={8, 8, 8, 8}, clip]{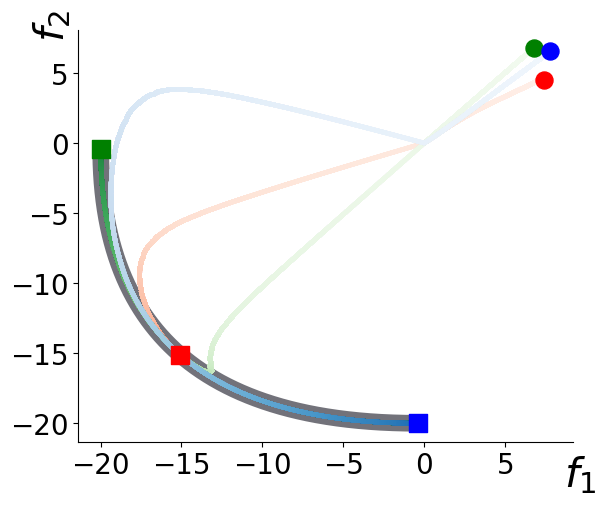}
      \vspace{-.1in}
      \caption{Objective space.}
      \label{fig:toy_pf}
  \end{subfigure}
  \hspace*{10px}
  \begin{subfigure}{0.45\columnwidth}
      \centering
      \includegraphics[width=\textwidth, trim={5, 8, 8, 8}, clip]{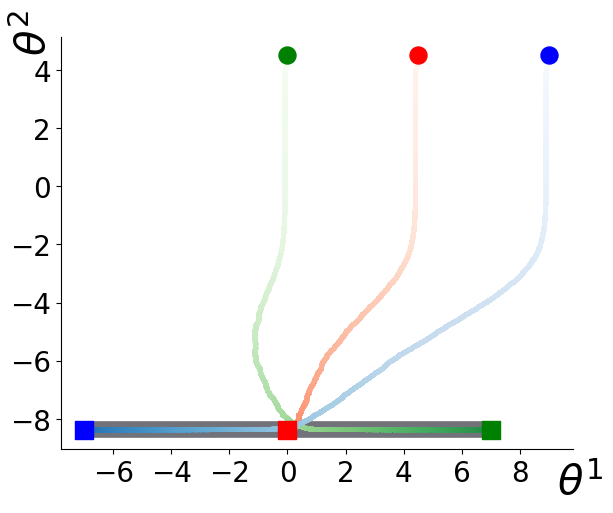}
      \vspace{-.1in}
      \caption{Parameter space.}
      \label{fig:toy_ps}
  \end{subfigure}
  \vspace{-.1in}
     \caption{Trajectory of $\bthe$ obtained by LORPMAN with $\balp = [0.5, 0.5]$ (red), $\balp = [1, 0]$ (green), and $\balp = [0, 1]$ (blue) in objective space (a) and parameter space (b). Circles denote the initial points and squares denote the final points. The gray lines in (a) and (b) denote the PF and PS, respectively.}
     \label{fig:toy}
\end{figure}

\subsection{Datasets with Two or Three Tasks}
\label{sec:two_three}
In this section, we examine the performance on datasets with two or three tasks as used in \cite{PAMAL}, namely, {\it MultiMNIST},
{\it Census},
and {\it UTKFace}.

\begin{figure*}[htbp]
  \centering
  \begin{subfigure}{0.45\textwidth}
      \centering
      \includegraphics[width=\textwidth, trim={8, 8, 8, 25}, clip]{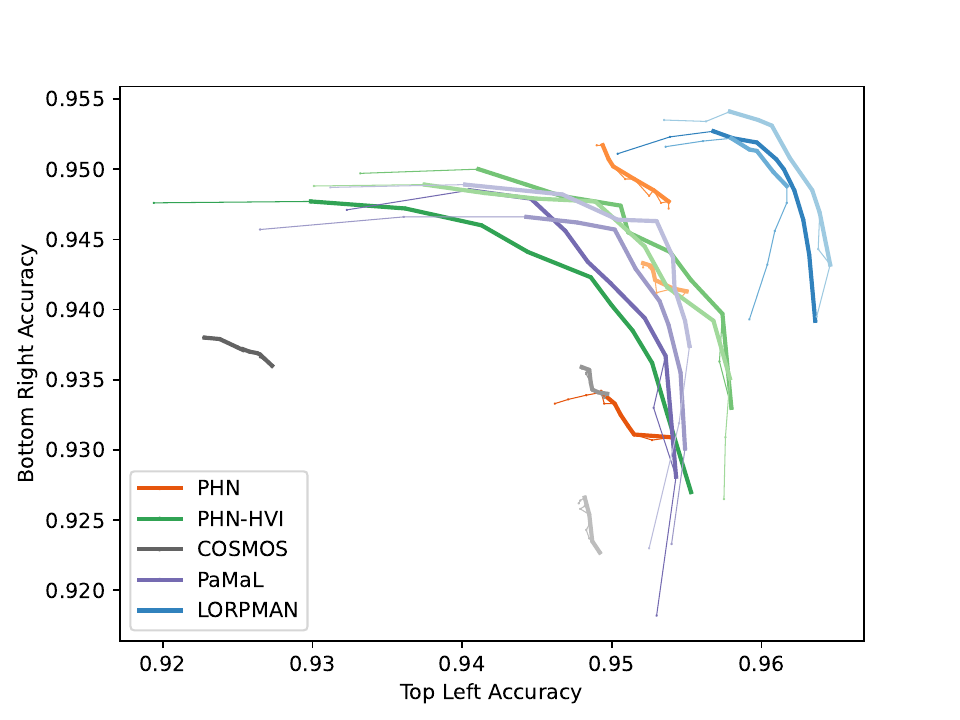}
      \vspace{-.15in}
      \caption{{\it MultiMNIST}.}
      \label{fig:multimnist}
  \end{subfigure}
  \hspace{10px}
  \begin{subfigure}{0.45\textwidth}
      \centering
      \includegraphics[width=\textwidth, trim={8, 8, 8, 25}, clip]{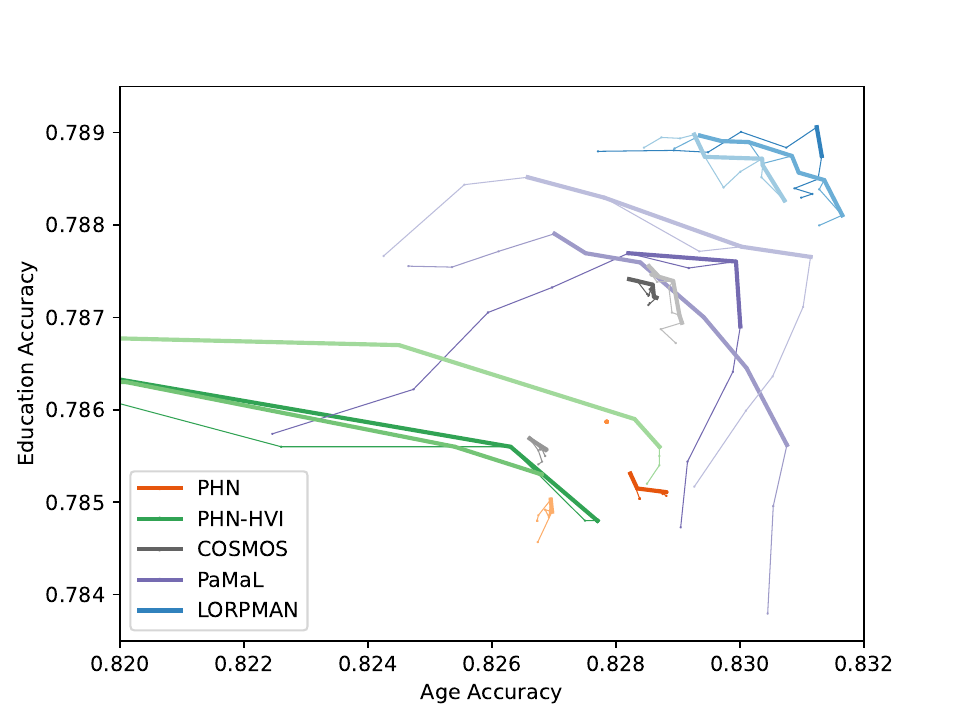}
      \vspace{-.15in}
      \caption{{\it Census}.}
      
      \label{fig:census}
  \end{subfigure}
  \vspace{-.1in}
     \caption{Test performance on {\it MultiMNIST} and {\it Census}. The PF is shown in bold. We show the results obtained by three different random seeds.}
     \label{fig:mnist_census}
\end{figure*}
\begin{table*}[h]
  \centering
  \caption{HV values obtained on \textit{MultiMNIST} and \textit{Census} (averaged over three random seeds). The standard deviation is shown in parentheses.
For PHN and PHN-HVI, we report the number of parameters in the hypernetwork. }
  \begin{tabular}{c|cc|cc} 
  \toprule
        & \multicolumn{2}{c}{\textit{MultiMNIST}} & \multicolumn{2}{c}{\textit{Census}} \\
        & HV & \# Parameters & HV & \# Parameters \\

   \midrule 
   PHN & 0.900 (0.0068) & 2.793M & 0.649 (0.0007) & 12.251M\\
   PHN-HVI & 0.906 (0.0020)& 2.793M & 0.650 (0.0008) & 12.251M\\
   COSMOS & 0.888 (0.0077) & 0.028M & 0.652 (0.0014) & 0.122M \\
   PaMaL & 0.905 (0.0010) &  0.055M & 0.654 (0.0006) & 0.242M\\
   LORPMAN & \bf 0.918 (0.0018) & 0.046M & \bf 0.656 (0.0004) & 0.133M \\

  \bottomrule
  \end{tabular}
  \label{tab:mnist_census}
  \end{table*}

\textbf{MultiMNIST and Census.} {\it MultiMNIST} \cite{multimnist} is a digit classification dataset with two tasks: classification of the top-left digit and classification of the bottom-right digit in each image. {\it Census} \cite{census} is a tabular dataset with two tasks: age prediction and education level classification.

Following \cite{PAMAL}, we use the LeNet\footnote{The parameters of a convolution layer is a four-dimensional tensor instead of a matrix. We adopt the approach in \cite{LoRA} to transform it into a matrix and perform low-rank approximation.} \cite{lenet} and Multilayer Perceptron (MLP) as shared-bottom of the base network for {\it MultiMNIST} and {\it Census}, respectively. For all algorithms, the number of training epochs is set to 10.  
For LORPMAN, we choose the scaling factor $s \in \{1, 2, 4, 6\}$ and freeze epoch $\in \{4, 6, 8\}$
based on the validation set. For both datasets, the rank $r$ for all layers is set to 8 and the orthogonal regularization coefficient $\lambda_o$ is set to 1. We do not tune $r$ and $\lambda_o$. More experimental details on the two datasets
are in Appendices \ref{sec:app_mnist} and \ref{sec:app_census}, respectively.

The resulting PFs
are shown in \cref{fig:mnist_census},
and the corresponding HV values in \cref{tab:mnist_census}. 
As can be seen, LORPMAN can obtain the
PF closer to the top-right region (i.e., with better accuracies on both objectives). COSMOS exhibits limited accuracies due to the constraints imposed by the small number of parameters. Despite PaMaL, PHN, and PHN-HVI having a larger number of parameters,
they show inferior HVs when compared to LORPMAN.

\textbf{UTKFace.}
{\it UTKFace} \cite{utkface} is a dataset with three tasks: age prediction, gender classification and race
classification. Following \cite{PAMAL}, we use the ResNet-18 \cite{resnet} as the shared bottom of the base network. The number of training epochs is 100. We tune
$\lambda_o \in \{0.1, 0.5, 1\}$, freeze epoch $\in \{60, 80\}$, and $r \in \{16, 32, 64\}$. The scaling factor $s$ is set to 1 without tuning. More experimental details are in \cref{sec:app_utkface}.
We do not compare with PHN and PHN-HVI because using the same hypernetwork structure as in \textit{MultiMNIST} and \textit{Census} will result in a hypernetwork with approximately 1 billion parameters.

\cref{tab:utkface} shows the HVs obtained by COSMOS, PaMaL, and LORPMAN and the number of parameters of each algorithm. 
As can be seen, LORPMAN outperforms PaMaL and COSMOS, despite  LORPMAN  has fewer parameters than  PaMaL.
\cref{fig:utkface}
shows the PFs
obtained by PaMaL and LORPMAN.\footnote{Since the PF obtained by COSMOS is inferior, it is omitted from the figure to maintain visual clarity.}
We can see the solutions obtained by LORPMAN outperform PaMaL in all three objectives.

\subsection{Scale to Large Number of Tasks}
\label{sec:large}

\textbf{CIFAR-100.} CIFAR-100 \cite{cifar} is an image classification dataset with 100 classes. These classes are further organized into 20 superclasses. As in \cite{routing, pcgrad, autol}, we consider each superclass as a task. The objective of each task is to accurately classify the image into one of the corresponding 5 more-specific classes.

Following \cite{autol},
we use VGG-16 \cite{vgg} as the base network.
The number of training epochs is set to 300 and we freeze the main model after 250 epochs, which is similar to the proportion in \textit{UTKFace}. The scaling factor $s$ is set to 1 without tuning. We search the orthogonal regularization coefficient $\lambda_o \in \{0.001, 0.005, 0.01, 0.1\}$ and rank $r \in \{8, 16\}$.
More experimental details are in \cref{sec:app_cifar}.

\begin{table}[h]
  \centering
  \caption{HV values obtained on \textit{UTKFace} (averaged over three random seeds). The standard deviation is shown in parentheses.}
  \begin{tabular}{ccc} 
  \toprule
   & HV & \# Parameters \\
   \midrule 
   COSMOS & 0.281 (0.003) & 11.2M \\
   PaMaL &  0.304 (0.003) & 33.6M \\
   LORPMAN &  \bf 0.314 (0.001) & 24.0M \\
  \bottomrule
  \end{tabular}
  \label{tab:utkface}
  \end{table}

\begin{figure}[h]
  \centering
      \begin{subfigure}{0.48\columnwidth}
      \centering
      \includegraphics[width=\textwidth]{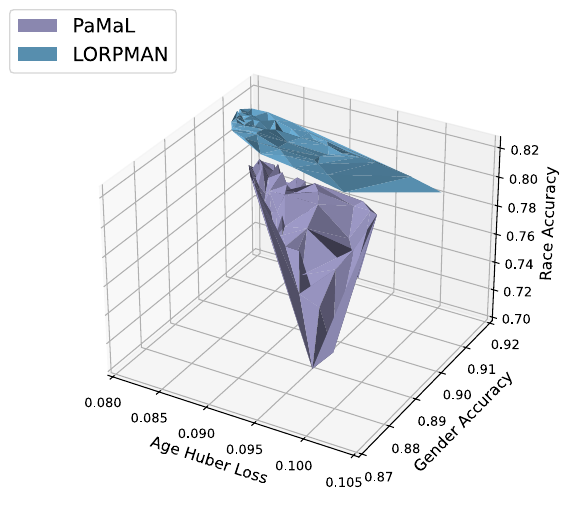}
      \label{fig:utk_a}
      \vspace*{-20px}
      \caption{}
  \end{subfigure}
  \begin{subfigure}{0.48\columnwidth}
      \centering
      \includegraphics[width=\textwidth]{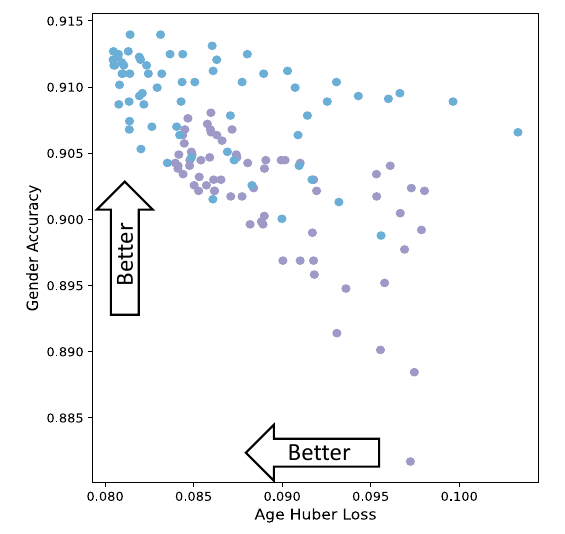}
      \label{fig:utk_b}
      \vspace*{-20px}
      \caption{}
  \end{subfigure}
  \begin{subfigure}{0.48\columnwidth}
      \centering
      \includegraphics[width=\textwidth]{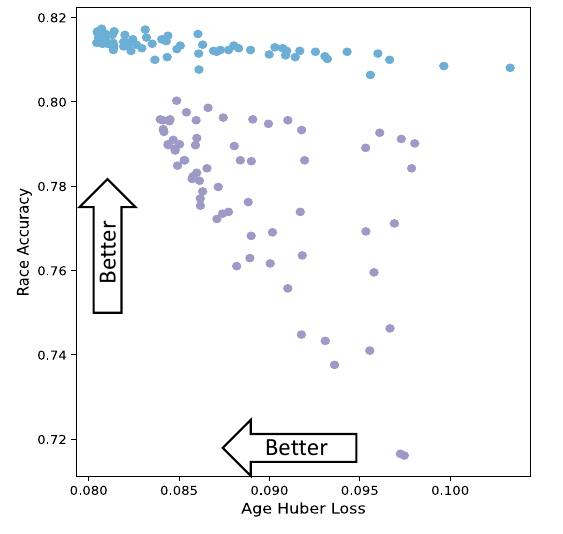}
      \label{fig:utk_c}
      \vspace*{-20px}
      \caption{}
  \end{subfigure}
  \begin{subfigure}{0.48\columnwidth}
      \centering
      \includegraphics[width=\textwidth]{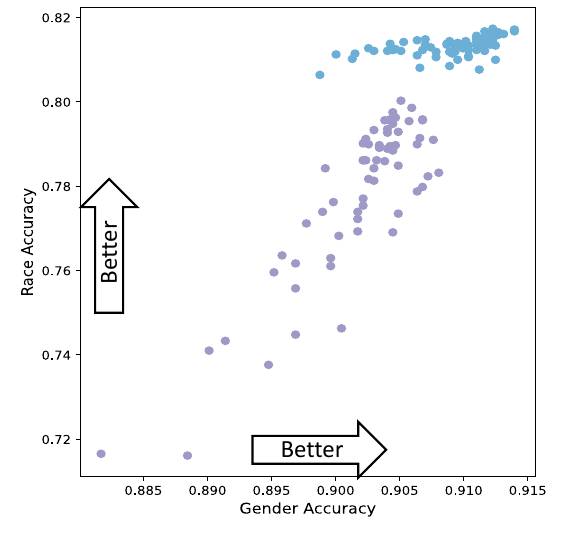}
      \label{fig:utk_d}
      \vspace*{-20px}
      \caption{}
  \end{subfigure}
  \vspace{-.1in}
      \caption{Test performance of PaMaL and LORPMAN on {\it UTKFace}. Figures~(b), (c), (d) are 2D projections of (a) for better illustration of the 3D surface.}
      \label{fig:utkface}
\end{figure}

The obtained HVs are shown in \cref{tab:cifar100}. The obtained PFs are in \cref{sec:pf_cifar}.
The results highlight the challenges encountered by PaMaL when dealing with a large number of objectives. PaMaL needs to jointly train 20 base networks, which leads to a large number of parameters and a small hypervolume. In contrast, LORPMAN achieves significantly better hypervolume while utilizing only 8.9\% of the parameters compared to PaMaL. This underscores the efficiency and effectiveness of LORPMAN in addressing problems with a large number of tasks.

\begin{table}[h]
  \centering
  \caption{HV values obtained on \textit{CIFAR-100} (averaged over three random seeds). The standard deviation is shown in parentheses.
  }
  \begin{tabular}{ccc} 
  \toprule
   & HV ($\times 10^{-2}$) & \# Parameters \\
   \midrule 
   COSMOS & 0.344 (0.018)  & 15.0M \\
   PaMaL &  0.0583 (0.010) & 296.5M \\
   LORPMAN &  \bf 0.887 (0.047) & 26.4M \\
  \bottomrule
  \end{tabular}
  \label{tab:cifar100}
  \end{table}

\cref{fig:loss} shows the convergence of the training losses of PaMaL with different learning rates. The training loss of LORPMAN 
(with learning rate $0.01$)
is also shown for comparison. 
We can observe that the poor performance of PaMaL on {\it CIFAR-100} is due to its slow convergence, since it has to jointly train 20 base networks. Simply increasing the learning rate does not help the convergence of PaMaL.

\begin{figure}[htbp]
    \centering
      \includegraphics[width=0.9\columnwidth, trim={8, 7, 8, 40}, clip]{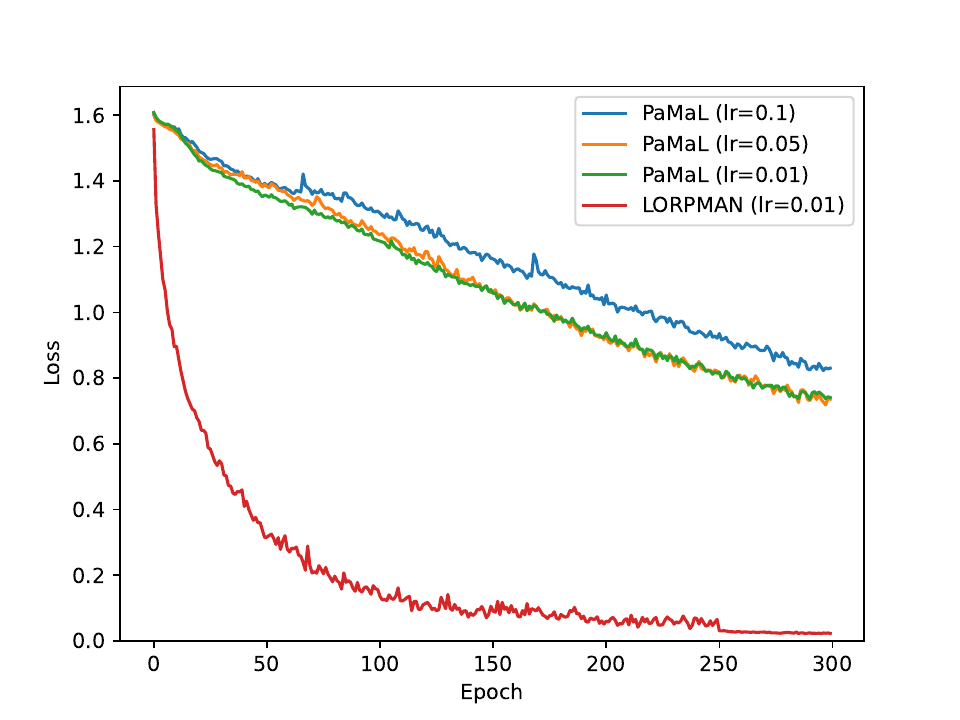}
     \vspace{-.1in}
     \caption{Training loss of PaMaL with number of epochs, using different learning rates.}
     \label{fig:loss}     
\end{figure}

\textbf{CelebA.} \textit{CelebA} \cite{celeba} is a face attribute classification dataset with 40 tasks, where each task is a binary classification of a face attribute. 
Following \cite{mtlmo}, we use ResNet-18 \cite{resnet} as the base network. The number of training epochs is 50, and we freeze the main model after 40 epochs, which is similar to the proportion in \textit{UTKFace}. The scaling factor $s$ is set to 1 without tuning. We search the orthogonal regularization coefficient $\lambda_o \in \{0.1, 0.5, 1\}$ and rank $r \in \{8, 16, 32\}$. Here, we consider two more baselines: (i) PHN with chunking \cite{PHNarxiv, cpmtl},
both with the original setting in \cite{PHNarxiv} (i.e, the hidden dimension is set to 100)
and a scaled setting with comparable number of parameters as LORPMAN (i.e., the hidden dimension is set to 500);
and (ii) FiLM condition, with a FiLM condition layer added after each block of ResNet-18. 
More experimental details are described in \cref{sec:app_celeba}. 

The HV values are shown in \cref{tab:celeba}. We can see that LORPMAN achieves much better performance while using only 21\% of parameters compared to PaMaL. The 
FiLM condition suffers similar problems as COSMOS due to the limited number of parameters. PHN with chunking also shows worse performance than LORPMAN. In comparison, the proposed LORPMAN is a more straightforward approach that achieves good performance and parameter efficiency.

  \begin{table}[h]
    \centering
    \caption{HV values obtained on \textit{CelebA}  (averaged over three random seeds). The standard deviation is shown in parentheses.
    }
    \begin{tabular}{ccc} 
    \toprule
     Method & HV ($\times 10^{-2}$) & \# Parameters \\
     \midrule
     PHN-Chunking (original) &  0.663 (0.027) & 36.6M \\
     PHN-Chunking (scaled) & 0.681 (0.048)& 92.3M \\
     FiLM   &  0.803 (0.022) & 11.4M \\
     COSMOS & 0.783 (0.013)  & 11.4M \\
     PaMaL &  0.472 (0.018) & 453.3M \\
     LORPMAN  &  \bf 1.167 (0.008)  & 96.8M \\

    \bottomrule
    \end{tabular}
    \label{tab:celeba}
    \end{table}

\subsection{Comparison with 
Discrete PF 
Baselines}
\label{sec:discrete_baseline}

In this experiment, we compare with five algorithms that obtain discrete PFs: EPO \cite{EPO}, PMTL \cite{PMTL}, MOO-SVGD \cite{MOOSVGD}, PNG \cite{PNG}, and GMOOAR \cite{gmooar}. Since most of these algorithms have performed experiments on \textit{MultiMNIST}, we use \textit{MultiMNIST} for comparison. The settings are the same as in \cref{sec:two_three}. 

The obtained solutions are shown in \cref{fig:mnist_dis}.
As can be seen, the proposed algorithm has better performance than existing discrete PF algorithms, even though they can
only generate a predetermined number of solutions.

\begin{figure}[t]
  \centering
      \includegraphics[width=0.9\columnwidth, trim={8, 7, 8, 35}, clip]{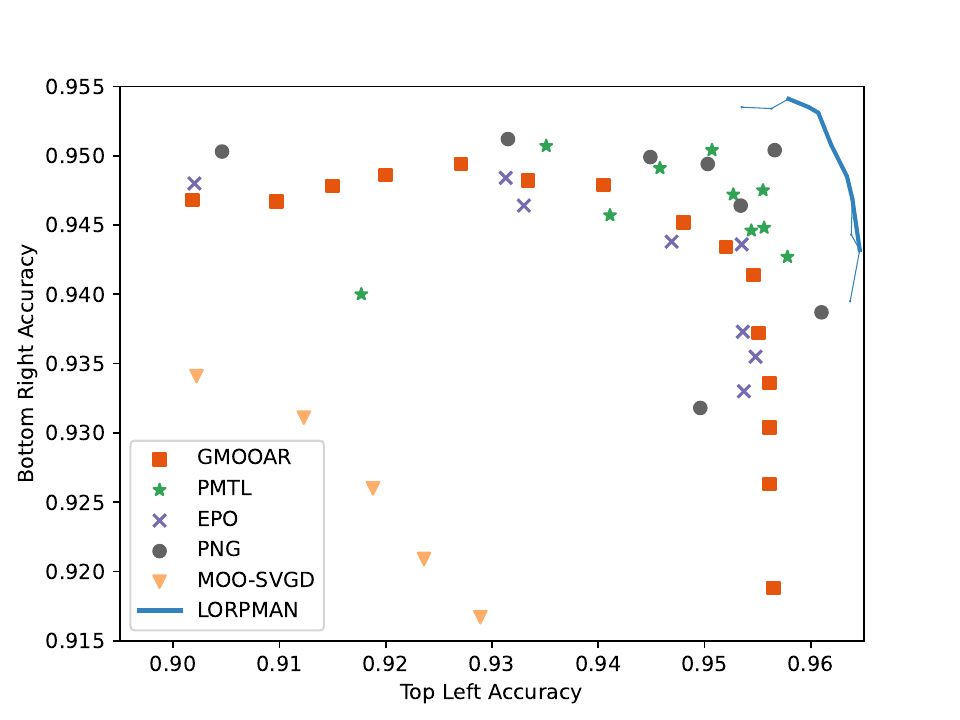}
      \vspace{-.1in}
      \caption{Test performance of LORPMAN and various discrete PF algorithms on {\it MultiMNIST}.}
      \vspace{-.1in}
      \label{fig:mnist_dis}
\end{figure}

\subsection{Comparison with Single-Solution Baselines}
\label{sec:single_baseline}
In this experiment, we compare with multi-task learning baselines 
that can only generate a single solution. These include
Linear Scalarization (LS), UW \cite{uw}, MGDA \cite{mtlmo}, DWA \cite{dwa}, PCGrad \cite{pcgrad}, IMTL \cite{imtl}, Graddrop \cite{graddrop}, CAGrad \cite{CAGrad}, RLW \cite{rlw}, Nash-MTL \cite{NashMTL}, RotoGrad \cite{rotograd}, and Auto-$\lambda$ \cite{autol}. For a more complete comparison, we also include the results obtained by Single-Task Learning (STL), where each task is trained independently, and the baselines of COSMOS and PaMaL which provide continuous approximations of the PF.

Following \cite{PAMAL}, we use a widely-used dataset \textit{CityScapes} \cite{cityscapes}. It is a scene-understanding dataset with two tasks: semantic segmentation and depth regression. We adopt the SegNet \cite{segnet} as the base network and use the same parameter configuration as in \cite{PAMAL}. The number of training epochs is 100. We tune freeze epoch $\in \{80, 90\}$, $s \in \{0.5, 1\}$, and $r \in \{32, 64\}$. More experimental details are in \cref{sec:app_city}.

LORPMAN/COSMOS/PaMaL can provide the whole PF.
We only select one solution from their obtained PFs for comparison
with the single-solution algorithms.
\cref{tab:cityscapes}
shows the
performance on 
semantic segmentation 
(mIOU and pixel accuracy)
and 
depth prediction 
(absolute and relative error)
on the test set.
We can see that LORPMAN not only surpasses COSMOS and PaMaL but also outperforms the single-solution algorithms. While MGDA achieves slightly better relative error in depth estimation, it has much worse performance in the semantic segmentation task. 

It is important to note that our objective is to learn the entire PF rather than focusing on a single solution.

\begin{table}[t]
  \footnotesize
  \renewcommand{\arraystretch}{0.95}
  \centering
  \caption{Test performance on \textit{CityScapes}. We show the average over three random seeds. Results, excluding LORPMAN, are taken from \cite{PAMAL}}. 
  \begin{tabular}{@{}lcccccccc@{}}
    \toprule
      & \multicolumn{2}{c}{Segmentation} &                      & \multicolumn{2}{c}{Depth} \\
      \cmidrule(lr){2-3} \cmidrule(lr){5-6}
       & \multicolumn{1}{l}{mIoU $\uparrow$} & \multicolumn{1}{l}{Pix Acc $\uparrow$} & \multicolumn{1}{l}{} & \multicolumn{1}{l}{Abs Err $\downarrow$} & \multicolumn{1}{l}{Rel Err $\downarrow$} \\
    \midrule
    STL                 &      70.96 &        92.12 &                      &         0.0141 &     \bf 38.644 \\
    \midrule
    LS                  &      70.12 &        91.90 &                      &         0.0192 &        124.061 \\
    UW                  &      70.20 &        91.93 &                      &         0.0189 &        125.943 \\
    MGDA                &      66.45 &        90.79 &                      &         0.0141 &         53.138 \\
    DWA                 &      70.10 &        91.89 &                      &         0.0192 &        127.659 \\
    PCGrad              &      70.02 &        91.84 &                      &         0.0188 &        126.255 \\
    IMTL                &      70.77 &        92.12 &                      &         0.0151 &         74.230 \\
    Graddrop            &      70.07 &        91.93 &                      &         0.0189 &        127.146 \\
    CAGrad              &      69.23 &        91.61 &                      &         0.0168 &        110.139 \\
    RLW                 &      68.79 &        91.52 &                      &         0.0213 &        126.942 \\
    Nash-MTL            &      71.13 &        92.23 &                      &         0.0157 &         78.499 \\
    RotoGrad            &      69.92 &        91.85 &                      &         0.0193 &        127.281 \\
    Auto-$\lambda$      &      70.47 &        92.01 &                      &         0.0177 &        116.959 \\
    \midrule
    COSMOS              &      69.78 &        91.79 &                      &         0.0539 &        136.614 \\
    PaMaL               &      70.35 &        91.99 &                      &         0.0141 &         54.520 \\
    LORPMAN    &   \bf 72.13 &   \bf 92.57 &                     &        \bf  0.0135 &         54.942 \\
    \bottomrule
    \end{tabular}
    
  \label{tab:cityscapes}
  \vspace*{-10px}
  \end{table}

\subsection{Ablation Studies}
\label{sec:ablation}
In this section, we investigate the effects of (i) orthogonal regularization, (ii) rank $r$, (iii) freeze epoch, and (iv) scaling factor $s$ on the performance of LORPMAN. Experiment is performed on the \textit{UTKFace} dataset.

\textbf{Orthogonal Regularization.}
\cref{tab:ablation2} shows the impact of orthogonal regularization on HV and the average correlation\footnote{The correlation between a pair of low-rank matrices 
$\mathbf{B}_i\mathbf{A}_i$ and
$\mathbf{B}_j\mathbf{A}_j$ 
is computed as $\texttt{flatten}(\mathbf{B}_i\mathbf{A}_i)^{\top}\texttt{flatten}(\mathbf{B}_j\mathbf{A}_j) / (\norm{\mathbf{B}_i\mathbf{A}_i}\norm{\mathbf{B}_j\mathbf{A}_j})$.} over all pairs of low-rank matrices.
We can see that with orthogonal regularization, the correlation between low-rank matrices is significantly reduced. Similar observations can also be found in traditional orthogonal regularization for parameters within a single neural network \cite{othreg1, orthogonality}. Such reduction in correlation encourages learning common features in the main network and differences in the low-rank matrices, thus leading to better HV value.

\begin{table}[ht]
    \centering
    \caption{Effects of orthogonal regularization on \textit{UTKFace}  (averaged over six random seeds).}
   \centering
\begin{tabular}{ccc}
    \toprule
     & HV & Correlation\\
    \midrule 
    w/o orthogonal reg &   0.309 (0.002) & 0.466 (0.045)\\
    w/ orthogonal reg & \bf 0.314 (0.001) & 0.067 (0.012)\\
   \bottomrule
  \end{tabular}
    \label{tab:ablation2}
\end{table}

\textbf{Rank $r$.} 
\cref{tab:ablation1} shows the effects of the rank $r$ on the HV and number of parameters. We can observe that LORPMAN with a rank $8$ can already outperform COSMOS and PaMaL (see \cref{tab:utkface}). By increasing the rank to 64, even better performance can be achieved.  However, further increasing the rank does not yield significant change.

\begin{table}[h]
  \centering
  \caption{Effects of rank $r$ on \textit{UTKFace} (averaged over six random seeds).}
  \begin{tabular}{ccc} 
  \toprule
   $r$ & HV & \# parameters \\
   \midrule 
   4 & 0.306 (0.002) & 12.1M\\
   8 & 0.310 (0.001) & 12.8M\\
   16 & 0.310 (0.002) & 14.5M \\
   32 & 0.311 (0.001) & 17.6M \\
   64 &  \bf 0.314 (0.001) & 24.0M \\
   128 &  0.313 (0.002) & 35.5M \\
  \bottomrule
  \end{tabular}
  \label{tab:ablation1}
  \end{table}

\textbf{Freeze Epoch.}
\cref{tab:ablation3} shows the effect of freeze epoch on HV. 
Compared with not freezing the main module (i.e., freeze epoch = 100),
freezing during the latter half of the training process 
encourages the low-rank matrices to learn task-specific representations instead of always relying on the main model, thus leading to better performance.

\textbf{Scaling Factor $s$.} \cref{tab:ablation4} shows the effect of scaling factor $s$ on HV. As can be seen, setting $s$ within a reasonable range (such as $[0.1, 1]$) leads to stable performance.

\begin{table}[htbp]
  \centering

\begin{minipage}{0.48\linewidth} 
\caption{Effects of freeze epoch on \textit{UTKFace} (averaged over six random seeds).}
\begin{tabular}{cc}
    \toprule
    freeze & HV \\
    \midrule 
    20 & 0.305 (0.001) \\
    40 & 0.311 (0.001) \\
    60 & 0.312 (0.002) \\
    80 & \bf 0.314 (0.001)  \\
    100 & 0.307 (0.003) \\
   \bottomrule
  \end{tabular}
    \label{tab:ablation3}
\end{minipage}
\hfill
\begin{minipage}{0.48\linewidth}
\caption{Effects of scaling factor $s$ on \textit{UTKFace} (averaged over six random seeds).}
\begin{tabular}{cc}
    \toprule
    $s$ & HV \\
    \midrule 
     0.1&  0.313 (0.001) \\
     0.5&  0.313 (0.001) \\
     1&  \bf 0.314 (0.001) \\
     1.5 & 0.312 (0.002) \\
     2&   0.310 (0.002) \\
   \bottomrule
  \end{tabular}
    \label{tab:ablation4}
\end{minipage}
  
\end{table}

\section{Conclusion}
In this paper, we introduce a novel method for continuous approximation of the PF. We use a main network with a collection of low-rank matrices. Our approach leverages the inherent structure of Pareto manifold to effectively learn trade-off solutions between tasks. Extensive empirical evaluation on various datasets demonstrates the superior performance of the proposed algorithm, especially when the number of tasks is large. 

One limitation of the proposed work is that we consider the same rank for all layers. Using different ranks for different layers to achieve better parameter efficiency and exploring automatic rank setting strategies could be interesting future directions. 

\section*{Acknowledgements}
This research was supported in part by the Research Grants Council of the Hong Kong Special
Administrative Region (Grant 16202523).

\section*{Impact Statement}
The proposed algorithm can effectively discover trade-off solutions among multiple tasks, leading to improved performance and cost efficiency in real-world applications involving multiple objectives. However, we should be aware the potential biases and privacy concerns when processing data from different tasks. 


\bibliography{reference.bib}
\bibliographystyle{icml2024}

\newpage
\appendix
\onecolumn
\section{Proof of \cref{theo:approx}}
\label{sec:proof}
Denote the optimal mapping from network input $\bx \in X \subset \R^u$ and preference vector $\balp \in \Delta^m$ to the corresponding point on the PF as $t(\bx, \balp): X \times \Delta^m \rightarrow \R^m$.
We have the following Theorem.
\begin{theorem}
Assume that $X \times \Delta^m$ is compact and $t(\bx, \balp)$ is continuous. For any $\epsilon > 0$, there exists a ReLU MLP $h$ with main network $\bthe_0$ and $m$ low-rank matrices $\bB_1\bA_1, \ldots, \bB_m\bA_m$, such that $\forall \bx \in X, \forall \balp \in \Delta^m$,
\begin{align*}
  \norm{t(\bx, \balp)-h\left(\bx; \bthe_0 + \sum_{i=1}^{m}\alpha_i \bB_i\bA_i\right)}\leq \epsilon.
\end{align*}
\end{theorem}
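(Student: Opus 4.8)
The plan is to reduce the statement to the classical universal approximation theorem for ReLU networks, by showing that the $\balp$-dependent weight combination $\bthe_0 + \sum_{i=1}^m \alpha_i \bB_i\bA_i$ is expressive enough to \emph{inject} the preference vector $\balp$ into the hidden activations, after which the remaining layers behave as an ordinary MLP acting on the joint input $(\bx,\balp)$. Since $X \times \Delta^m$ is compact and $t$ is continuous, the universal approximation theorem supplies a ReLU MLP $g : \R^{u+m} \to \R^m$ with $\norm{t(\bx,\balp) - g(\bx,\balp)} \le \epsilon$ on $X \times \Delta^m$. The whole task then becomes realizing $g(\bx,\balp)$ \emph{exactly} by a network $h$ of the prescribed main-plus-low-rank form.

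First I would absorb biases into the weight matrices in the standard way, so each layer acts on its input augmented by a constant entry $1$, with the last column of a weight matrix playing the role of the bias. The key observation is that the last column of $\sum_i \alpha_i \bB_i^1\bA_i^1$ equals $\sum_i \alpha_i \boldsymbol{c}_i$, where $\boldsymbol{c}_i$ is the last column of $\bB_i^1\bA_i^1$. Choosing $\boldsymbol{c}_i = \boldsymbol{e}_{j_i}$, a standard basis vector pointing to a designated hidden unit $j_i$, and zeroing the corresponding entries of $\bthe_0$, makes hidden unit $j_i$ receive pre-activation exactly $\alpha_i$. Because $\alpha_i \ge 0$ on the simplex, the ReLU acts there as the identity, so these $m$ units output $\alpha_1,\ldots,\alpha_m$. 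Each $\bB_i^1\bA_i^1 = \boldsymbol{e}_{j_i}\boldsymbol{e}_{\mathrm{last}}^\top$ has rank one, so the construction is genuinely low-rank; in all remaining layers I set $\bB_i^l\bA_i^l = \boldsymbol{0}$, making those layers $\balp$-independent, exactly as a fixed MLP requires.

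Second, I would use the main weights $\bthe_0$ of that same first layer to carry $\bx$ forward undistorted. Since $X$ is compact, $\bx$ is bounded, so I can pick a shift $c$ large enough that $x_k + c \ge 0$ for all $\bx \in X$; then units computing $\mathrm{ReLU}(x_k + c)$ act linearly and $\bx$ is recovered by an affine map. Hence the first layer outputs an affinely invertible encoding of $(\bx,\balp)$, and folding this affine recovery into the first affine map of $g$ shows that $h$ with the constructed parameters computes precisely $g(\bx,\balp)$. Combined with the approximation bound for $g$, this gives $\norm{t(\bx,\balp) - h(\bx; \bthe_0 + \sum_{i=1}^m \alpha_i \bB_i\bA_i)} \le \epsilon$ for all $(\bx,\balp) \in X \times \Delta^m$.

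The main obstacle, and the step deserving the most care, is this first-layer construction: one must verify that the preference vector can be injected \emph{purely through the weighted combination acting on the constant (bias) coordinate}, while the single ReLU nonlinearity in that layer simultaneously leaves both $\balp$ recoverable (via nonnegativity on the simplex) and $\bx$ recoverable (via the shift trick on the compact set $X$). Everything after the first layer is a verbatim appeal to universal approximation, and the rank-one realization of the injection confirms that the claim is fully compatible with the low-rank constraint.
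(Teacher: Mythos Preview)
Your argument is correct and follows essentially the same strategy as the paper: reduce to universal approximation on the joint input $(\bx,\balp)$, then build a first layer whose $\balp$-weighted rank-one bias perturbations inject $\alpha_1,\ldots,\alpha_m$ into dedicated ReLU units while the main weights pass $\bx$ through undistorted, with all later layers carried entirely by $\bthe_0$. The only cosmetic difference is that you recover $\bx$ through the first ReLU via a constant shift (exploiting compactness of $X$), whereas the paper uses the sign-doubling identity $x=\sigma(x)-\sigma(-x)$; both devices are standard and either suffices here.
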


\begin{proof}
Denote $\sigma(\bx) \equiv \max(\bm{0}, \bx)$. From the universal approximation theorem \cite{uat}, for any $\epsilon > 0$, there exists $v \in \mathbb{N}, \bM \in \R^{v \times (u+m)}, \bN \in \R^v, \bC \in \mathbb{R}^{m \times v}$ such that
\begin{align*}
  \sup_{\bx \in X, \alpha \in \Delta^m} \norm{t(\bx, \balp)-g(\bx, \balp)}\leq \epsilon,
\end{align*}
where $g(\bx, \balp) = \bC \sigma(\bM[\bx,\balp]^\top + \bN)$.

We define two matrices $\bR \in \R^{u \times (2u+m)}$ and $\bS \in \R^{(2u+m) \times u}$ as follows:
\begin{align*}
  \bR_{i, j} = 
  \begin{cases}
    1,  & \text{if } j = 2i-1\\
    -1, & \text{if } j = 2i \\
    0, & \text{otherwise}
  \end{cases},
\end{align*}
and
\begin{align*}
  \bS_{i, j} = 
  \begin{cases}
    1,  & \text{if } i = 2j-1 \text{ or } (i>2u \text{ and } j=u)\\
    -1, & \text{if } i = 2j \\ 
    0, & \text{otherwise} 
  \end{cases}.
\end{align*}
Define $m$ vectors $\bU_1, \ldots, \bU_m \in \{0,1\}^{2u+m}$ as:
\begin{align*}
  (\bU_i)_j = 
  \begin{cases}
    1, & j = 2m+i \\
    0, & \text{otherwise} 
  \end{cases}.
\end{align*}
Then, we have
\begin{align*}
  \bS \sigma\left(\bR \bx + \sum_{i=1}^{m} \alpha_i \bU_i \right) = [\bx, \balp].
\end{align*}

We can construct a MLP $h(\bx; \bM, \bN, \bC, \bR, \bS, \bU) = \bC \sigma(\bM\bS\sigma(\bR \bx + \bU) + \bN)$. 

Let $\bthe_0 = (\bM, \bN, \bC, \bR, \bS, \bm{0})$ and $\bthe_i = (\bm{0}, \bm{0}, \bm{0},\bm{0}, \bm{0}, \bU_i), i=1, \ldots, m$. We have 
  \begin{align*}
  h(x;\bthe_0 + \sum_{i=1}^{m}\alpha_i \bthe_i) & = h(\bm{x}; \bM, \bN, \bC, \bR, \bS, \sum_{i=1}^{m}\alpha_i \bU_i)           \\
                                   & = \bC \sigma(\bM\bS\sigma(\bR \bx + \sum_{i=1}^{m} \alpha_i \bU_i) + \bN) \\
                                   & = g(\bS\sigma(\bR \bx + \sum_{i=1}^{m} \alpha_i \bU_i))                \\
                                   & = g(\bx, \balp).
  \end{align*}
Since $\bthe_i$ consists of a single non-zero entry (equal to 1) with all other elements being zero, it can be reshaped into a matrix $\bB_i\bA_i$ with rank 1.
  \end{proof}

\section{Proof of \cref{prop:po}}
\label{sec:proof_po}
\begin{proposition}[\cite{convex}]
Given any $\balp \in \{\balp ~|~ \alpha_i > 0\}$, the optimal solution of the scalarized problem $\min_{\bthe} \sum_{i=1}^m \alpha_i f_i(\bthe)$ is a Pareto-optimal solution of the original multi-objective optimization problem \cref{eq:moo}.   
\end{proposition}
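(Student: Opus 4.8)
The plan is to argue by contradiction, exploiting the strict positivity of every weight $\alpha_i$. Suppose $\bthe^*$ is an optimal solution of the scalarized problem $\min_{\bthe}\sum_{i=1}^m \alpha_i f_i(\bthe)$ but is \emph{not} Pareto-optimal. By the definition of Pareto-optimality given earlier, this means $\bthe^*$ is dominated by some other solution $\bthe'$; that is, $f_i(\bthe^*) \geq f_i(\bthe')$ for all $i \in [m]$, and there exists an index $j \in [m]$ with $f_j(\bthe^*) > f_j(\bthe')$. The goal is to show that $\bthe'$ then achieves a strictly smaller scalarized objective value, contradicting the optimality of $\bthe^*$.

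The key step is to assemble the weighted-sum inequality term by term. First I would split the sum at the distinguished index $j$: for every $i \neq j$ I use $\alpha_i \geq 0$ together with $f_i(\bthe^*) \geq f_i(\bthe')$ to get $\alpha_i f_i(\bthe^*) \geq \alpha_i f_i(\bthe')$. Then, for the index $j$, I invoke the assumption $\alpha_j > 0$ together with the \emph{strict} domination $f_j(\bthe^*) > f_j(\bthe')$ to obtain the strict inequality $\alpha_j f_j(\bthe^*) > \alpha_j f_j(\bthe')$. Summing these contributions yields
\begin{align*}
  \sum_{i=1}^m \alpha_i f_i(\bthe^*) > \sum_{i=1}^m \alpha_i f_i(\bthe'),
\end{align*}
which directly contradicts the hypothesis that $\bthe^*$ minimizes $\sum_{i=1}^m \alpha_i f_i(\bthe)$. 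Hence no dominating $\bthe'$ can exist, and $\bthe^*$ must be Pareto-optimal.

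This is a classical fact about linear scalarization, so there is no substantive obstacle; the only point requiring care is ensuring that the chain of inequalities is \emph{strict}, which is exactly where the assumption $\balp \in \{\balp \mid \alpha_i > 0\}$ is indispensable. I would emphasize that the positivity of $\alpha_j$ is what upgrades the single strictly-improved objective into a strict decrease of the whole weighted sum; were $\alpha_j$ allowed to be zero, the argument would break and one could only conclude weak Pareto-optimality rather than Pareto-optimality. It is worth noting that the result holds in full generality without any convexity or continuity assumptions on the $f_i$, since the argument only manipulates the inequalities supplied by the definition of dominance.
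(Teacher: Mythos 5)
Your proof is correct and follows essentially the same argument as the paper: a proof by contradiction in which a dominating solution $\bthe'$ is shown to achieve a strictly smaller weighted sum, using the strict positivity of the weights to convert the single strict dominance inequality into a strict inequality for the scalarized objective. The only cosmetic difference is that you split the sum at the strictly-improved index $j$ before summing, whereas the paper sums the weighted differences $\sum_{i=1}^m \alpha_i \left(f_i(\bthe) - f_i(\bthe')\right) > 0$ in one step; the underlying reasoning is identical.
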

\begin{proof}
Suppose $\bthe$ is the optimal solution of $\sum_{i=1}^m \alpha_i f_i(\bthe)$ but it is not Pareto optimal. Based on the definition of Pareto optimal, there exists $ \bthe'$, such that for $i \in [m]$, $f_i(\bthe) \geq f_i(\bthe')$, and $\exists i \in [m], f_i(\bthe) > f_i(\bthe')$. Since $\alpha_i > 0$, we have
\begin{equation*}
    \sum_{i=1}^m \alpha_i (f_i(\bthe)-f_i(\bthe')) > 0.
\end{equation*}
It can be rewritten as:
\begin{equation*}
    \sum_{i=1}^m \alpha_i f_i(\bthe) > \sum_{i=1}^m \alpha_i f_i(\bthe'),
\end{equation*}
which contradicts the assumption that $\bthe$ is the optimal solution of $\sum_{i=1}^m \alpha_i f_i(\bthe)$.
\end{proof}
\section{Multi-Forward Regularization}
\label{sec:mf_reg}
\citet{PAMAL} propose the multi-forward regularization to penalize the wrong ordering of solutions. Denote the set of current sampled reference vectors as $\mathcal{V} \equiv \{\balp^1, \ldots, \balp^b\}$. Then, a  directed graph $\mathcal{G}_i = (\mathcal{V}, \mathcal{E}_i)$ is constructed for each task where $\mathcal{E}_i=\{(\balp, \balp')\in \mathcal{V} \times \mathcal{V} : \alpha_i<\alpha'_i \}$. The multi-forward regularization loss is defined as:
\begin{align*}
  \mathcal{R}_p = \sum_{i=1}^{m}\log\left(\frac{1}{|\mathcal{E}_i|}\sum_{(\balp, \balp')\in \mathcal{E}_i} e^{[f(\bthe(\balp)) -f(\bthe(\balp'))]_+}\right).
\end{align*}

\section{Experimental Details}
\label{sec:exp_details}
\subsection{Hypervolume}
\label{sec:hv}
Hypervolume (HV) \cite{hv1, hv2} is a popular indicator in multi-objective optimization (MOO), offering a measure of the performance of the obtained solution set. Formally, for a given solution set $\mathcal{P}$ and a reference point $\br$, the HV is defined as:
\begin{align}
  HV(\mathcal{P}; \br) = \Lambda (\{\bq \in \R^m ~~|~~ \exists \bp \in \mathcal{P} : \bp \geq \bq \geq \br\}),
\end{align}
where $\Lambda$ denotes the Lebesgue measure.
\cref{fig:hv} provides an illustration of the HV indicator in a two-objective maximization 
problem. The shaded area encapsulated by the solution set and the reference point represents the HV.
\begin{figure}[h]
  \centering
      \includegraphics[width=0.4\columnwidth]{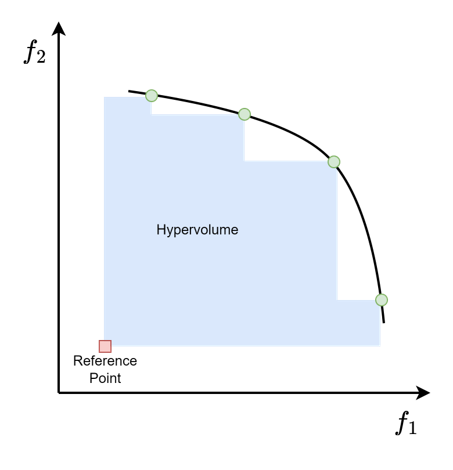}
      \caption{Illustration of the hypervolume indicator.}
      \label{fig:hv}
\end{figure}

For all datasets except \textit{UTKFace} \cite{utkface}, the reference point for HV evaluation is set to $[0, 0, \ldots, 0]$, since $0$ is the smallest-possible accuracy. For \textit{UTKFace}, the first objective is the Huber loss \cite{huber}, not accuracy. Since $0.5$ is close to the observed largest Huber loss in the experiment, we set the reference point to $[0.5, 0, 0]$.

\subsection{Toy Problem}
\label{sec:toy}
We use the two-objective toy problem adopted by \cite{CAGrad, NashMTL, PAMAL}, with parameters $\bthe = [\theta^1, \theta^2] \in \R^2$. The problem is formulated as follows:
\begin{flalign*}
  \text{Minimize}~ f_1(\bthe) &= c_1(\theta)h_1(\theta) + c_2(\theta)g_1(\theta) ~~\text{and}~~
   f_2(\bthe) = c_1(\theta)h_2(\theta) + c_2(\theta)g_2(\theta),\\
  ~\text{where}~~
  h_1(\bthe) &= \log{\big(\max(|0.5(-\theta^1-7)-\tanh{(-\theta^2)}|,~~0.000005)\big)} + 6, \\
  h_2(\bthe) &= \log{\big(\max(|0.5(-\theta^1+3)-\tanh{(-\theta^2)}+2|,~~0.000005)\big)} + 6, \\
  g_1(\bthe) &= \big((-\theta^1+7)^2 + 0.1*(-\theta^2-8)^2\big)/10-20, \\
  g_2(\bthe) &= \big((-\theta^1-7)^2 + 0.1*(-\theta^2-8)^2)\big/10-20, \\
  c_1(\bthe) &= \max(\tanh{(0.5*\theta^2)},~0)~~\text{and}~~c_2(\bthe) = \max(\tanh{(-0.5*\theta^2)},~0).
\end{flalign*}

\subsection{MultiMNIST}
\label{sec:app_mnist}
\textbf{Network Structure.} Following \cite{PAMAL}, the base network consists of a shared bottom and two task-specific heads. The shared bottom is the LeNet \cite{lenet} with two convolution layers and a fully-connected layer. Each task-specific head consists of two full-connect layers. For PHN and PHN-HVI, we use the same hypernetwork structure as \cite{PHN}. The input preference vector $\balp$ first goes through a three-layer MLP to get the ray embedding. Then, for each layer of the base network, a linear layer is used to generate the base network parameters based on the ray embedding.

\textbf{Parameter Settings.} Following \cite{PAMAL}, for LORPMAN and PaMaL, we set the multi-forward regularization coefficient $\lambda_p$ to 0, and window size $b$ to 4.
The distribution parameters $p$ for sampling $\balp$ is set to 1 for all algorithms. For PHN and PHN-HVI, we use the same parameter setting as in the original paper. We use the Adam optimizer. For COSMOS, the cosine similarity regularization coefficient $\lambda_c$ is set to 1. The learning rate is set to $1e-3$ and the batch size is set to 256.

\subsection{Census}
\label{sec:app_census}
\textbf{Network Structure.} Following \cite{PAMAL}, the base network consists of a shared bottom and two task-specific heads. The shared bottom is an one-layer MLP. Each task-specific head is a fully-connected layer. The hypernetwork is constructed in the same way as for \textit{MultiMNIST}. 

\textbf{Parameter Settings.} Following \cite{PAMAL}, for LORPMAN and PaMaL, we set the multi-forward regularization coefficient $\lambda_p$ to 5, and window size $b$ to 2. The distribution parameters $p$ for sampling $\balp$ is set to 1 for all algorithms. For PHN-HVI, we use the same parameter setting as in the original paper. For COSMOS, the cosine similarity regularization coefficient $\lambda_c$ is set to 1. We use the Adam optimizer. The learning rate is set to $1e-3$ and the batch size is set to 256.
\subsection{UTKFace}
\label{sec:app_utkface}
\textbf{Network Structure.} Following \cite{PAMAL}, the base network consists of a shared bottom and three task-specific heads. The shared bottom is a ResNet-18 \cite{resnet}. Each task-specific head is a fully-connected layer.

\textbf{Parameter Settings.} Following \cite{PAMAL}, for LORPMAN and PaMaL, we set the multi-forward regularization coefficient $\lambda_p$ to 1, and window size $b$ to 3. The distribution parameters $p$ for sampling $\balp$ is set to 2 for all algorithms. For COSMOS, the cosine similarity regularization coefficient $\lambda_c$ is set to 1. We use the Adam optimizer. The learning rate is set to $1e-3$ and the batch size is set to 256.

\subsection{CIFAR-100}
\textit{CIFAR-100} has 20 5-way classification tasks: \\
1. Aquatic Mammals: beaver, dolphin, otter, seal, whale \\
2. Fish: aquarium fish, flatfish, ray, shark, trout \\
3. Flowers: orchids, poppies, roses, sunflowers, tulips \\
4. Food Containers: bottles, bowls, cans, cups, plates \\
5. Fruits and Vegetables: apples, mushrooms, oranges, pears, sweet peppers \\
6. Household Electrical Devices: clock, computer keyboard, lamp, telephone, television \\
7. Household Furniture: bed, chair, couch, table, wardrobe \\
8. Insects: bee, beetle, butterfly, caterpillar, cockroach \\
9. Large Carnivores: bear, leopard, lion, tiger, wolf \\
10. Large Man-made Outdoor Things: bridge, castle, house, road, skyscraper \\
11. Large Natural Outdoor Scenes: cloud, forest, mountain, plain, sea \\
12. Large Omnivores and Herbivores: camel, cattle, chimpanzee, elephant, kangaroo \\
13. Medium-sized Mammals: fox, porcupine, possum, raccoon, skunk \\
14. Non-insect Invertebrates: crab, lobster, snail, spider, worm \\
15. People: baby, boy, girl, man, woman \\
16. Reptiles: crocodile, dinosaur, lizard, snake, turtle \\
17. Small Mammals: hamster, mouse, rabbit, shrew, squirrel \\
18. Trees: maple, oak, palm, pine, willow \\
19. Vehicles 1: bicycle, bus, motorcycle, pickup truck, train \\
20. Vehicles 2: lawn-mower, rocket, streetcar, tank, tractor \\

\label{sec:app_cifar}
\textbf{Network Structure.} Following \cite{autol}, the base network consists of a shared bottom and 20 task-specific heads. The shared bottom is a VGG-16 \cite{vgg}. Each task-specific head is a fully-connected layer. 

\textbf{Parameter Settings.} For LORPMAN and PaMaL, we set the multi-forward regularization coefficient $\lambda_p$ to 1, and window size $b$ to 2. The distribution parameters $p$ for sampling $\balp$ is set to 2 for all algorithms. For COSMOS, the cosine similarity regularization coefficient $\lambda_c$ is set to 1. Following \cite{autol}, we use the SGD optimizer. The learning rate is set to $1e-2$ and the batch size is set to 64. 
\subsection{CelebA} 
\label{sec:app_celeba}
CelebA \cite{celeba} has 40 binary classification tasks: 5 o'Clock Shadow, Arched Eyebrows, Attractive, Bags Under Eyes, Bald, Bangs, Big Lips, Big Nose, Black Hair, Blond Hair, Blurry, Brown Hair, Bushy Eyebrows, Chubby,
Double Chin, Eyeglasses, Goatee, Gray Hair, Heavy Makeup, High Cheekbones, Male, Mouth Slightly Open, Mustache, Narrow Eyes, No Beard, Oval Face, Pale Skin, Pointy Nose, Receding Hairline, Rosy Cheeks, Sideburns, Smiling, Straight Hair, Wavy Hair, Wearing Earrings, Wearing Hat, Wearing Lipstick, Wearing Necklace, Wearing Necktie, Young.

\textbf{Network Structure.} Following \cite{mtlmo}, the base network consists of a shared bottom and 40 task-specific heads. The shared bottom is a ResNet-18 \cite{resnet}. Each task-specific head is a fully-connected layer. Since CelebA has the overfitting problem \cite{unitary}, we adopt dropout regularization as in \cite{unitary} for all algorithms (i.e., we add a dropout layer in each block as well as after the first convolution layer and after average pooling layer).

\textbf{Parameter Settings.} For LORPMAN and PaMaL, we set the multi-forward regularization coefficient $\lambda_p$ to 1, and window size $b$ to 3. The distribution parameters $p$ for sampling $\balp$ is set to 2 for all algorithms. For COSMOS, the cosine similarity regularization coefficient $\lambda_c$ is set to 1. We use the Adam optimizer. The learning rate is set to $1e-3$ and the batch size is set to 128.

\subsection{Cityscapes} 
\label{sec:app_city}
\textbf{Network Structure.} Following \cite{CAGrad, NashMTL, PAMAL}, the base network consists of a shared bottom and 2 task-specific heads. The shared bottom is a SegNet \cite{segnet}. Each task-specific head consists of two convolution layers. 

\textbf{Parameter Settings.} Following \cite{PAMAL}, for LORPMAN and PaMaL, we set the multi-forward regularization coefficient $\lambda_p$ to 1, and window size $b$ to 3. The distribution parameters $p$ for sampling $\balp$ is set to 7 for all algorithms. We use the Adam optimizer. The initial learning rate is set to $1e-4$ and is halved after 75 epochs. The batch size is set to 8.

\section{Pareto Front Obtained on CIFAR-100}
\label{sec:pf_cifar}

\cref{fig:cifar100} shows the models obtained by COSMOS, PaMaL and LORPMAN. We can observe that the solutions obtained by PaMaL have much lower accuracies while the solutions obtained by COSMOS concentrate in a small region. The proposed LORPMAN achieves good accuracy and diversity.
\begin{figure}[b]
    \centering
    \includegraphics[width=0.65\textwidth]{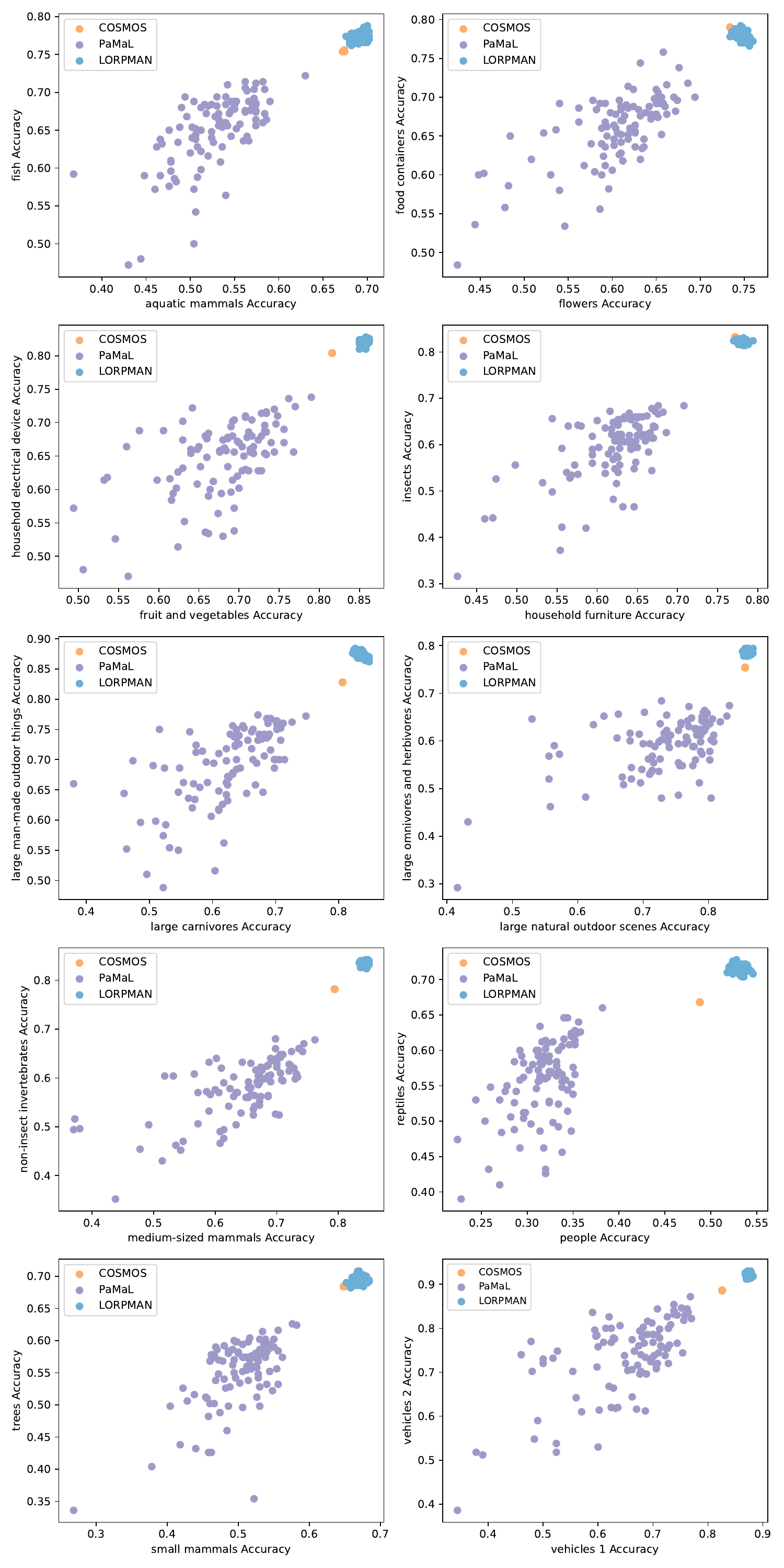}
    \caption{Test performance of COSMOS, PaMaL, and LORPMAN on {\it CIFAR-100}.}
    \label{fig:cifar100}
\end{figure}



\end{document}